\def\eqref#1{equation~\ref{#1}}
\def\1{\bm{1}}
\def\va{{\bm{a}}}
\def\vr{{\bm{r}}}
\def\vw{{\bm{w}}}
\def\mA{{\bm{A}}}
\def\mB{{\bm{B}}}
\def\mI{{\bm{I}}}
\DeclareMathAlphabet{\mathsfit}{\encodingdefault}{\sfdefault}{m}{sl}
\SetMathAlphabet{\mathsfit}{bold}{\encodingdefault}{\sfdefault}{bx}{n}
\def\sN{{\mathbb{N}}}
\newcommand{\R}{\mathbb{R}}
\DeclareMathOperator*{\argmin}{arg\,min}
\newif\iftaclinstructions
\newcommand{\instr}
\newtheorem{theorem}{Theorem}[section]
\newtheorem{assumption}{Assumption}[section]
\newtheorem{lemma}[theorem]{Lemma}
\theoremstyle{definition}
\newtheorem{definition}{Definition}[section]
\theoremstyle{remark}
\newtheorem*{remark}{Remark}
\definecolor{darkgreen}{RGB}{0,140,0}
\title{
\vspace*{-0.5in}
{{\small \hfill TACL'26}\\
\vspace*{.25in}} 
Learning to Optimize Multi-Objective Alignment Through Dynamic Reward Weighting}
\author{
  \textbf{Yining Lu}$^\diamond$\thanks{\,\,\,Work done during an internship at Amazon.} \quad
  \textbf{Zilong Wang}$^\ddagger$\thanks{\,\,\,Work done at Amazon. Now at Google DeepMind. Email: \texttt{zlwang@ucsd.edu}} \quad
  \textbf{Shiyang Li}$^\ddagger$ \quad
  \textbf{Xin Liu}$^\ddagger$ \quad
  \textbf{Changlong Yu}$^\ddagger$ \\
  \textbf{Qingyu Yin}$^\ddagger$ \quad
  \textbf{Zhan Shi}$^\ddagger$ \quad
  \textbf{Zixuan Zhang}$^\ddagger$ \quad
  \textbf{Meng Jiang}$^\diamond$ \\
  \\
  $^\diamond$University of Notre Dame \\
  \texttt{\{ylu33, mjiang2\}@nd.edu} \\
  \\
  $^\ddagger$Amazon \\
  \texttt{\{syangli, xliucr, changlyu, qingyy, zzsamshi, zzx\}@amazon.com}
}
\date{}
\begin{document}
\maketitle
\begin{abstract}
Prior work in multi-objective reinforcement learning typically uses linear reward scalarization with fixed weights, which provably fails to capture non-convex Pareto fronts and thus yields suboptimal results. This limitation becomes especially critical in online preference alignment for large language models. Here, stochastic trajectories generated by parameterized policies create highly non-linear and non-convex mappings from parameters to objectives that no single static weighting scheme can find optimal trade-offs.
We address this limitation by introducing dynamic reward weighting, which adaptively adjusts reward weights during the online reinforcement learning process. Unlike existing approaches that rely on fixed-weight interpolation, our dynamic weighting continuously balances and prioritizes objectives in training, facilitating effective exploration of Pareto fronts in objective space.
We introduce two approaches of increasing sophistication and generalizability: hypervolume-guided weight adaptation and gradient-based weight optimization, offering a versatile toolkit for online multi-objective alignment. Our extensive experiments demonstrate their compatibility with commonly used online reinforcement learning algorithms, effectiveness across multiple datasets, and applicability to different model families, consistently achieving Pareto dominant solutions with fewer training steps than fixed-weight linear scalarization baselines.
\end{abstract}

\section{Introduction}
\begin{figure*}[ht]
    \centering
    \includegraphics[width=\linewidth]{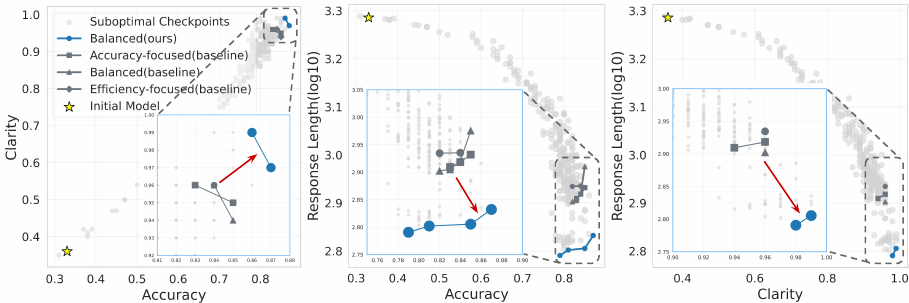}
    \caption{Pareto fronts obtained by our gradient-based weight optimization compared to three baselines using fixed-weight reward interpolation. We train the Qwen3-8B model \citep{yang2025qwen3technicalreport} on the Math500 dataset \citep{lightman2023lets} using GRPO \citep{shao2024deepseekmathpushinglimitsmathematical}. The three training configurations, \textit{accuracy-focused}, \textit{balanced}, and \textit{efficiency-focused}, correspond to different weight distributions initialized to our optimization objectives: \textit{accuracy}, \textit{conciseness}, and \textit{clarity}. We aim to train models that achieve strong problem-solving ability (higher accuracy) with computational efficiency (fewer tokens) while maintaining interpretable reasoning processes (better clarity). Gray dots indicate Pareto suboptimal checkpoints generated along training. \textbf{Clearly, our dynamic reward weighting consistently builds superior Pareto fronts that dominate baselines across all objectives, demonstrating its effectiveness in multi-objective alignment.}}
    \label{fig: teaser figure}
\end{figure*}

% Multi-objective on-policy RL
Online reinforcement learning (RL) has become the de facto approach in aligning large language models (LLMs) for complex reasoning tasks, such as mathematical problem solving \citep{shao2024deepseekmathpushinglimitsmathematical, zhang2025grpoleaddifficultyawarereinforcementlearning}, code generation \citep{chen2025acereasonnemotronadvancingmathcode, yao2025traininglanguagemodelsgenerate}, and logical reasoning \citep{xie2025logicrlunleashingllmreasoning, liu2025prorlprolongedreinforcementlearning}. While this approach has demonstrated significant success in improving model performance, it predominantly focuses on optimizing accuracy while overlooking other essential objectives that are crucial for practical deployment. For instance, key factors such as response length and output clarity, which directly impact inference efficiency and user experience, are not included in the rewarding process. In this paper, we focus on multi-objective online RL for LLMs that simultaneously optimizes auxiliary objectives alongside traditional accuracy metrics for reasoning tasks. 

Existing multi-objective alignment studies rely on either fixed weights \citep{yao2025traininglanguagemodelsgenerate, kimiteam2025kimik15scalingreinforcement} or heuristic rules for reward interpolation \citep{zhang2025grpoleaddifficultyawarereinforcementlearning, aggarwal2025l1controllinglongreasoning}. However, these methods have three critical limitations: (1) Empirically, we find that different objectives vary in learning difficulty.\footnote{We provide experimental justification in Appendix~\ref{appendix: motivation justification}.} As a result, objectives that reach saturation quickly will plateau throughout the remaining training phases while continuing to receive equal gradient updates, leading to inefficient allocation of learning efforts. (2) Theoretically, static linear scalarization provably fails to cover non-convex regions of the Pareto front and thus yields suboptimal training results \citep{10.5555/2591248.2591251, hayes2022practical}. (3) Heuristic rules for combining objectives lack flexibility and cannot generalize to new objectives or different tasks.

One natural way to address these limitations is to dynamically adjust objective weights throughout training based on the learning progress of each objective. This training paradigm, which we term \textit{dynamic reward weighting}, embodies a core principle that has proven effective in other optimization domains \citep{liu2024stochastic,doge}: \textit{Redirecting learning effort towards objectives with the greatest potential for improvement.}
In this work, we demonstrate that applying this principle to multi-objective LLM alignment could yield substantial improvements over existing static weighting schemes. As shown in \autoref{fig: teaser figure}, our gradient-based method outperforms all three baselines with varying weight configurations.

Specifically, we propose two progressively sophisticated approaches, spanning from strict to flexible training constraints, to suit different multi-objective alignment scenarios. (1) \textbf{Hypervolume-guided weight adaptation} (\S\ref{sec: vanilla reward adaptation}): When user preferences for different objectives are given, this method encourages the policy to discover new non-dominated solutions at each training step. It rewards new checkpoints that demonstrate positive hypervolume contributions, thereby proactively pushing the Pareto front in the desired optimization direction. (2) \textbf{Gradient-based weight optimization} (\S\ref{sec: online weight adaptation}): When user preferences are not available, the method computes how learning each objective contributes to improving overall model performance through gradient analysis and dynamically reallocates weights accordingly.  Extensive experiments demonstrate that our methods outperform static linear scalarization baselines and a concurrent work PAMA \citep{he2025paretomultiobjectivealignmentlanguage}, achieving superior Pareto fronts with fewer training steps across multiple online RL algorithms (GRPO, REINFORCE, and RLOO), datasets (Math500, MATH, and SafeSQL), and model families (Qwen3, Deepseek, Mistral, and Llama3). These results highlight the importance of dynamic reward weighting for multi-objective RL.\footnote{Code to reproduce our results: \href{https://github.com/yining610/dynamic-reward-weighting}{github.com/yining610/dynamic-reward-weighting}.} In summary, our contributions are threefold:
\begin{itemize}[leftmargin=*]
\item We identify the importance of dynamic reward weighting in multi-objective LLM alignment and formalize this optimization challenge.
\item We propose a comprehensive toolkit spanning hypervolume and gradient-based methods to address scenarios with and without human preference priors.
\item We demonstrate the effectiveness of our toolkit across multiple online RL algorithms, datasets, and models, showing improved training efficiency and Pareto optimal performance.
\end{itemize}

\section{Related Works}
Multi-objective RL seeks to balance multiple, sometimes conflicting objectives when optimizing policies \citep{manyobjectiveoptimization, emmerich2018tutorial, hayes2022practical, huang2025reinforcementlearningrubricanchors}. This is crucial because optimizing under each criterion can lead to significantly different policies being learning \citep{Rădulescu_Mannion_Zhang_Roijers_Nowé_2020}. We divide the notable prior works on multi-objective RL for LLMs into two groups: methods designed for training steerable multi-objective policies (\S\ref{subsec: steerable multi-objective finetuning}) and those focused on general reasoning preference alignment (\S\ref{subsec: general reasoning preference finetuning}).

\subsection{Steerable Multi-Objective Preference Finetuning}
\label{subsec: steerable multi-objective finetuning}
Most multi-objective RL works in the LLM era explore the problem from a steerability perspective, aiming to train policies that can be steered to generate desirable outputs across a continuum of user-preferred reward weights. Representative approaches include off-policy RL methods \citep{zhou-etal-2024-beyond, guo-etal-2024-controllable, xiong2025projectionoptimizationgeneralframework}, post-hoc policy mixing \citep{NEURIPS2023_e12a3b98, wang2025mpoefficientpostprocessingframework}, and conditional training that incorporates preference weights as input \citep{NEURIPS2024_3d038008, yang2024rewards, wang-etal-2024-conditional, NEURIPS2024_89f39d0b}.

However, these methods typically treat objectives in a static manner, either by using fixed-weight linear scalarization for rewards or by merging trained policies with fixed weights. While they enable inference-time flexibility for users to navigate trade-offs between objectives, they suffer from a fundamental limitation: linear scalarization provably fails to capture non-convex regions of the Pareto front \citep{hayes2022practical, 10.5555/2591248.2591251}, resulting in suboptimal policies. 

From a geometric perspective, linear scalarization is equivalent to sweeping a hyperplane whose normal vector is the given weights over the objective space. According to the supporting hyperplane theorem \citep{boyd2004convex}, for each weight vector, there exists a non-zero supporting hyperplane that is tangent to the convex hull of the Pareto front at some point. Consequently, linear scalarization can only recover Pareto optimal solutions on the convex portions of the Pareto front but fails on those in non-convex (concave) regions \citep{wei2022fairness}.

This limitation becomes particularly critical in online RL settings, where stochastic trajectories create highly non-linear and non-convex mappings from policy parameters to objectives. We therefore replace static weighting with a dynamic reward weighting mechanism that continuously rebalances and reprioritizes objectives during training. This approach facilitates a more thorough exploration of the objective space, enabling the final policy to approximate Pareto optimal solutions in regions that static linear methods cannot reach.

\subsection{General Multi-Objective Preference Finetuning}
\label{subsec: general reasoning preference finetuning}
A more straightforward application of multi-objective RL is to train a policy to acquire multiple capabilities in a single training run \citep{mukherjee2024multiobjectivealignmentlargelanguage}, which is the direction we aim to improve. Since the introduction of PPO \citep{ouyang2022traininglanguagemodelsfollow} and GRPO \citep{shao2024deepseekmathpushinglimitsmathematical}, on-policy RL has become the de facto approach for LLM preference alignment. \citet{NEURIPS2023_b8c90b65} have applied RLHF to train LLMs that generate more relevant, factual, and informative text. Beyond accuracy, GRPO has been used to train LLMs for high-quality code generation \citep{yao2025traininglanguagemodelsgenerate} and to produce more concise responses in mathematical reasoning tasks \citep{zhang2025grpoleaddifficultyawarereinforcementlearning, kimiteam2025kimik15scalingreinforcement, aggarwal2025l1controllinglongreasoning}. Yet these methods still compute rewards with static linear scalarization or rely on human-defined interpolation rules, limiting their generalizability to new objectives.

To the best of our knowledge, no unified toolkit currently exists for automatically guiding LLM training across arbitrary objective sets. The most relevant line of work is the multi-gradient descent algorithm (MGDA) for multi-objective optimization \citet{DESIDERI2012313, NEURIPS2022_f91bd64a, liu2024stochastic, gu2024safebalancedframeworkconstrained, Xu_Ju_Liu_Yang_2025, li2025gradientadaptivepolicyoptimizationmultiobjective}. These methods compute gradients for each objective and combine them using various aggregation strategies to guide parameter updates. However, existing approaches primarily focus on relatively simple supervised fine-tuning tasks, such as logistic regression, and study on classical two-objective Fonseca problems \citep{fonseca}. They are not directly applicable to LLM alignment. 

Most closely related to our work, the concurrent study PAMA \citep{he2025paretomultiobjectivealignmentlanguage} adopts the minimum-norm optimization principle from MGDA to LLM alignment. While PAMA represents an important step toward multi-objective LLM training, it has only been evaluated on general preference alignment tasks with two objectives (e.g., generating positive sentiment and longer responses on the IMDb dataset \citep{maas-etal-2011-learning}) using Noon PPO. Its generalizability to more complex reasoning tasks and to different RL algorithms remains unclear.
\section{Background}
\subsection{Notations}
We now formulate common notations used throughout the paper. We consider a multi-objective optimization problem with $K$ objectives, where each objective is associated with a weight $w_i$ for $i = 1, 2, \ldots, K$. The weight vector is denoted as $\vw = (w_1, w_2, \ldots, w_K)$, and the corresponding reward vector is $\vr = (r_1, r_2, \ldots, r_K)$.
Following standard practice in RL, we denote the policy parameterized by $\theta$ as $\pi_\theta$, and the objective function as $J(\theta)$. We use superscript $w$ to denote weighted quantities (e.g., $G^w_t$ for weighted discounted return and $r^w$ for weighted reward) and $i$ to denote single-objective quantities (e.g., $w_i$ and $J_i(\theta)$).
\subsection{Pareto Front and Hypervolume Contribution}
\begin{definition}[Pareto Front]
Let $X\subseteq\mathbb{R}^n$ be the feasible set and
$f:X\to\mathbb{R}^K$, $f(x)=(f_1(x),\ldots,f_K(x))$, the vector of
objectives \textit{to be maximized}.  
Write $u\succ v$ if $u_i\geq v_i$ for every $i$ and $u\neq v$. The Pareto set and Pareto front are formally defined as:
\begin{align*}
&\text{Pareto set} \coloneqq P = \left\{ x \in X \mid \nexists y \in X : f(y) \succ f(x) \right\}\\ &\text{Pareto front} \coloneqq f(P) = \left\{ f(x) \mid x \in P \right\}.
\end{align*}
Intuitively, the Pareto front consists of all objective vectors for which no objective can be improved without sacrificing at least one other objective. Points in the Pareto set are called \textit{Pareto optimal} or \textit{non-dominated} solutions.
\end{definition}

\begin{definition}[Hypervolume Indicator]
Let $\mA = \{\va^{(1)}, \ldots, \va^{(n)}\} \subset \mathbb{R}^K$ be a finite set of objective vectors, where each $\va^{(i)} = (a^{(i)}_1, \ldots, a^{(i)}_K)$. Let $\vr = (r_1, \ldots, r_K) \in \mathbb{R}^K$ be a reference point satisfying $r_j \leq a^{(i)}_j$ for all $i \in \{1, \ldots, n\}$ and $j \in \{1,\ldots, K\}$ (i.e., $\vr$ is dominated by every $\va^{(i)}$). The hypervolume of $\mA$ with respect to $\vr$ is:
\begin{align*}
\text{HV}(\mA;\vr)=
\Lambda\Bigl(\bigcup_{\va\in\mA} [\va,\vr]\Bigr), \;\text{where }
[\va,\vr] = \\ \{\,x\in\mathbb{R}^K \mid r_j \le x_j \le a_j,\; \forall j \in \{1, \ldots, K\}\}
\end{align*}
$\Lambda$ denotes the $K$-dimensional Lebesgue measure. The hypervolume indicator is the $K$-dimensional volume of the region dominated by $\mA$ and bounded below by $\vr$; higher hypervolume indicates larger coverage of the objective space.
\end{definition}

\begin{definition}[Hypervolume Contribution]
For any point $\va\in \mathbb{R}^K$, its hypervolume contribution to a set $\mA$ is defined as the change in hypervolume when $\va$ is presented and therefore it is always non-negative:
$$
\Delta\text{HV}(\va,\mA) = \text{HV}(\mA \cup \{\va\}) -\text{HV}\bigl(\mA\setminus\{\va\}) \geq 0.
$$
\end{definition}

\section{Hypervolume-Guided Weight Adaptation}
\label{sec: vanilla reward adaptation}

% \begin{figure}[ht]
%   \vspace{-1.2\baselineskip}
%   \centering
%   \includegraphics[width=\linewidth]{figures/vanilla/vanilla_reward_visualization.pdf}
%   \captionsetup{aboveskip=-1pt, belowskip=3pt}
%   \caption{Visualization of $r_{\text{pareto}}(\vr,\mB)$.}
%   \label{fig: r-pareto}
% \end{figure}
A natural and straightforward way to improve multi-objective alignment is to encourage pushing the Pareto front at each training step. Therefore, we introduce hypervolume-guided weight adaptation, which leverages human-specified weights $\vw$ through two hierarchical components: (1) the standard reward vector $\vr$, and (2) a meta-level reward $r_{\text{pareto}}(\vr, \mB)$ that encourages policies to achieve new Pareto optimality. Note that we do not directly modify the given human-specified weights $\vw$, instead we use the meta-level signal to amplify rewards when training discovers new Pareto fronts. We design $r_{\text{pareto}}(\vr)$ as:\footnote{We experimented with multiple other activation functions with varying range and found that Eq.~\ref{eq: r pareto} is the most stable one for our scenario. We provide more details in Appendix~\ref{appendix: hyperparameters}.}
\begin{equation}
r_{\text{pareto}}(\vr, \mB) = 0.5 + 1.5\tanh(\Delta\text{HV}(\vr, \mB)).
\label{eq: r pareto}
\end{equation}
$\mB$ represents the performance buffer storing validation performance of the current Pareto set. In practice, we compute $\Delta\text{HV}(\vr, \mB)$ using the recursive dimension-sweep algorithm of \citet{1688440}, which achieves computational complexity $\mathcal{O}(n^{k-2} \log n)$ for $n$ points in $k$ dimensions. The complete procedure is detailed in Algorithm~\ref{alg: vanilla}.

\subsection{Experiment Setup}
\label{subsec: vanilla experiment setup}
\begin{algorithm}[ht]
    \small
    \captionsetup{type=algorithm}
    \captionof{algorithm}{Hypervolume-Guided Weight Adaptation}
    \label{alg: vanilla}
    \begin{algorithmic}[1]
      \State \textbf{Input:} Training set $\mathcal{D}_{\text{train}}$, validation set $\mathcal{D}_{\text{val}}$, initial policy parameters $\theta_{0}$, human specified weights $\vw\!\in\!\mathbb{R}^{K}_{\ge 0}$
      \State \textbf{Hyperparameters:} batch size $B$, rollout size $G$, maximum training steps $T$
      \State Evaluate $\vr_{\theta_{0}}\!=\!\text{Evaluate}(\theta_{0},\mathcal{D}_{\text{val}})$
      \State Initialize Pareto set $\mB\leftarrow\{\vr_{\theta_{0}}\}$
      \State $r_{\text{pareto}}\leftarrow 1$ \Comment{\textcolor{darkgreen}{No hypervolume contribution yet}}
      \For{$t\!=\!1$ \textbf{to} $T$}
        \State Sample mini-batch $\mathcal{D}_{b}\subset\mathcal{D}_{\text{train}}$ of size $B$
        \ForAll{query $q\in\mathcal{D}_{b}$}
          \State Sample $G$ answers $\{y_{i}\}^{G}_{i=1}\sim\pi_{\theta_{t-1}}(\cdot\mid q)$
          \State Compute reward vectors $\{\vr_{i}\}^{G}_{i=1}$ for each $y_{i}$
          \State Compute scalar rewards $r_{i}\gets\vw^{\top}\vr_{i}$
          \State \textbf{Adaptation:} $\tilde{r}_{i}\gets r_{\text{pareto}}\cdot r_{i}$
        \EndFor
        \State Update $\theta_{t-1}$ via chosen RL objective function $J(\theta_{t-1})$ using $\tilde{r}_i$
        \State $\vr_{\theta_{t}}\gets\text{Evaluate}(\theta_{t},\mathcal{D}_{\text{val}})$
        \State $r_{\text{pareto}}\gets 0.5 + 1.5\tanh(\Delta\text{HV}(\vr_{\theta_t}, \mB))$ \Comment{\textcolor{darkgreen}{Update meta weight based on hypervolume contribution}}
        \If{$\Delta\text{HV}(\vr_{\theta_{t}};\mB)>0$}
          \State $\mB\leftarrow\mB\cup\{\vr_{\theta_{t}}\}$ \Comment{\textcolor{darkgreen}{Update Pareto set}}
        \EndIf
      \EndFor
    \end{algorithmic}
\end{algorithm}

\begin{table*}[ht]
\setlength{\belowcaptionskip}{-5pt}
\centering
\small
\setlength{\tabcolsep}{2pt}
\renewcommand{\arraystretch}{0.92}
\newcolumntype{G}{!{\color{black!35}\vrule width 0.6pt}} % vertical separator
\begin{adjustbox}{width=\linewidth}
\begin{tabular}{@{}l c G c G c @{\hskip 6pt} c G c @{\hskip 6pt} c G c @{}}
\toprule
\multirow{2}{*}{\textbf{Method}}
  & \multicolumn{1}{c}{\textbf{PAMA}}
  & \multicolumn{2}{c}{\textbf{Accuracy-focused}}
  & \multicolumn{2}{c}{\textbf{Balanced}}
  & \multicolumn{2}{c}{\textbf{Efficiency-focused}} \\
\cmidrule(lr){2-2}\cmidrule(lr){3-4}\cmidrule(lr){5-6}\cmidrule(lr){7-8}
  & \multicolumn{1}{c}{ Baseline}
  & \multicolumn{1}{c}{Baseline} & \multicolumn{1}{c}{Hypervolume-guided}
  & \multicolumn{1}{c}{Baseline} & \multicolumn{1}{c}{Hypervolume-guided}
  & \multicolumn{1}{c}{Baseline} & \multicolumn{1}{c}{Hypervolume-guided} \\
\textbf{Training RL}
  & \multicolumn{7}{c}{\textbf{\emph{Accuracy $\uparrow$ \, / \, Response Length $\downarrow$ \, / \, Clarity $\uparrow$}}} \\
\midrule
GRPO       & 0.360 / 1936 / 0.385 & 0.832 / 701 / 0.962 & \textbf{0.850} / \textbf{619} / \textbf{0.970}  & 0.832 / 687 / 0.935 & 0.825 / \textbf{683} / \textbf{0.955} & 0.837 / 650 / 0.970 & \textbf{0.840} / 731 / 0.967 \\
REINFORCE  & 0.380 / 1921 / 0.400 & 0.789 / 837 / 0.977 & \textbf{0.805} / \textbf{827} / \textbf{0.988}  & 0.798 / 872 / 0.994 & 0.788 / \textbf{837} / 0.978 & 0.778 / 676 / 0.985 & \textbf{0.790} / \textbf{618} / \textbf{1.000}  \\
RLOO       & 0.340 / 1939 / 0.365 & 0.827 / 677 / 0.965 & \textbf{0.829} / 813 / 0.940  & 0.820 / 720 / 0.940 & \textbf{0.823} / \textbf{639} / \textbf{0.955} & 0.830 / 565 / 0.967 & \textbf{0.843} / 700 / \textbf{0.980} \\
\bottomrule
\end{tabular}
\end{adjustbox}
\vspace{-2mm}
\caption{Pareto fronts obtained from hypervolume-guided weight adaptation compared to PAMA and fixed-weight baselines. $\uparrow$ and $\downarrow$ indicate optimization direction. Each value represents the average performance across the entire Pareto front under a given training setup. \textbf{Hypervolume-guided weighting outperforms baselines across most objectives, weight configurations, and RL algorithms}, and in certain cases, demonstrates full dominance with superior results on all three objectives.}
\label{table: vanilla}
\end{table*}

We evaluate our proposed method across multiple online RL algorithms, including natural policy gradient methods REINFORCE \citep{NIPS1999_464d828b} and RLOO \citep{ahmadian2024basicsrevisitingreinforcestyle}, and heuristic policy gradient method GRPO \citep{shao2024deepseekmathpushinglimitsmathematical}. The main experiments are conducted on the Math500 dataset \citep{lightman2023lets} using the Qwen3-8B model \citep{yang2025qwen3technicalreport}. To provide a more comprehensive evaluation of our method, we then extend our experiments to additional datasets spanning mathematical and coding reasoning tasks (Math algebra problems \citep{mathlighteval} and SafeSQL \citep{yao2025traininglanguagemodelsgenerate}), as well as additional model families, including Deepseek-7B\citep{deepseekai2024deepseekllmscalingopensource}, Mistral-7B \citep{jiang2023mistral7b}, and Llama3-8B \citep{grattafiori2024llama3herdmodels}, in \S\ref{sec: extensive experiment}. All results are reported from the test set.

For math tasks, we design the following three objectives: (1) generating correct solutions (\textit{accuracy}), (2) producing efficient solutions that minimize computational steps and tokens (\textit{conciseness}), and (3) demonstrating clear reasoning with step-by-step thinking process (\textit{clarity}). We denote the weight vector as $\vw = [w_\text{accuracy},w_\text{conciseness},w_\text{clarity}]$. Each objective is evaluated using heuristic rules to ensure the corresponding reward is verifiable (0 or 1). For easier analysis, in the following tables and figures, we present accuracy and clarity performance based on their reward scores, while conciseness is reported directly through response length.

Following common practice in multi-objective RL \citep{NEURIPS2023_e12a3b98, li2025multiobjectivelargelanguagemodel}, we use fixed-weight linear scalarization as one of our baselines. Specifically, we evaluate three carefully chosen weight configurations that represent different optimization priorities: 1) \textit{Accuracy-focused} ($\vw = [0.5, 0.25, 0.25]$) prioritizing accuracy over conciseness and formatness; 2) \textit{Balanced} ($\vw = [0.334, 0.333, 0.333]$) weighting all objectives equally; and 3) \textit{Efficiency-focused} ($\vw = [0.25, 0.375, 0.375]$) emphasizing conciseness and clarity. Additionally, we compare our method against the concurrent work PAMA \citep{he2025paretomultiobjectivealignmentlanguage}.\footnote{As their code is not publicly available, we reimplement PAMA following the methodology and hyperparameters reported in the paper.} More experiment setup details can be found in Appendix~\ref{appendix: vanilla details}.

\subsection{Results}
We compare our hypervolume-guided weight adaptation against fixed-weight baselines and PAMA in \autoref{table: vanilla}. Across all three online RL algorithms, there is consistently at least one weight configuration where our method outperforms the baselines on all objectives. For example, under both accuracy- and efficiency-focused settings, REINFORCE equipped with hypervolume-guided weighting achieves Pareto fronts that have better average scores across all three objectives.

It is important to note that better average scores of Pareto fronts do not necessarily guarantee true Pareto optimality. Due to space constraints, we include the full Pareto front visualizations for \autoref{table: vanilla} in Appendix~\ref{appendix: visualization} (\autoref{fig: reinforce visualization}). These plots provide compelling visual evidence that REINFORCE with our hypervolume-guided weight adaptation achieves clearly superior Pareto fronts, dominating baselines under both accuracy- and efficiency-focused settings.

To better understand the performance differences between weight configurations, we analyze the meta-reward $r_{\text{pareto}}$ during REINFORCE training. As shown in \autoref{fig: meta reward distribution}, the balanced weight setting yields higher meta-rewards that encourage more aggressive exploration, paradoxically, leading to suboptimal performance compared to two other configurations. This suggests that effective hypervolume-guided weighting requires careful calibration of meta-reward to avoid overly aggressive updates that can hinder convergence.
\begin{figure}[t]
  \centering
  \includegraphics[width=\linewidth]{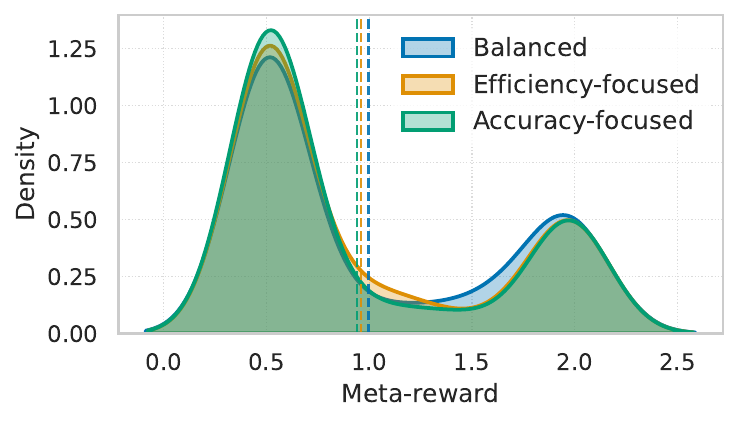}
  \captionsetup{aboveskip=2pt, belowskip=0pt}
  \caption{Meta-reward $r_{\text{pareto}}$ distributions, with vertical dashed lines indicating the average value.}
  \label{fig: meta reward distribution}
\end{figure}

\section{Gradient-Based Weight Optimization}
\label{sec: online weight adaptation}
We now consider a more sophisticated yet flexible setting where reward weights are not fixed but dynamically adapted in real-time. In this scenario, human-specified preferences are not assumed. We formulate the online weights adaptation during training as an optimization problem and begin our study with the natural policy-gradient method REINFORCE:
\begin{align}
\nabla J(\theta)=\mathbb{E}\!\big[\nabla\log\pi_{\theta}(a_{t}\mid s_{t})\,G^{w}_{t}\big],
\label{eq: reinforce}
\end{align}
which exhibits linearity to the per-objective gradients given by $\nabla J(\theta)=\sum_{i=1}^{K} w_{i}\,\nabla J_i(\theta)$. We prove this linearity through a simple nested summation 
\begin{align*}
G^w_t &= \sum_{l=0}^{\infty}\gamma^lr_{t+l+1}^w = \sum_{l=0}^{\infty}\gamma^l \sum_{i=1}^Kw_i r_{t+l+1}^i \\
&= \sum_{i=1}^K w_i\sum_{l=0}^{\infty}\gamma^lr_{t+l+1}^i = \sum_{i=1}^Kw_i G_{t}^i.
\end{align*} 
Then there exists a constant $C \in \R$ such that $J(\theta) = \sum_{i=1}^Kw_i J_i(\theta) + C$, which implies
$$
\argmin_{\vw \in \Delta^K} J(\theta(\vw)) = \argmin_{\vw \in \Delta^K}\sum_{i=1}^Kw_i J_i(\theta).
$$
Inspired by domain adaptation \citep{doge} and based on the above linearity assumption, we can derive the following reward weight update rule, where $\eta$ is the learning rate and $\mu$ is the regularization factor (we provide full proof in Appendix \ref{appendix: proof} and the whole procedure in Algorithm~\ref{alg: optimization}):
\begin{align}
& \vw^{(t)} = \frac{\vw^{(t)}}{\sum_{k}w_k^{(t)}}, \;\vw^{(t)} = \vw^{(t-1)} \odot \exp(\frac{\eta^{(t)}\mI^{(t)}}{\mu}), \nonumber \\ 
&I_i^{(t)} = \langle \nabla J_i(\theta^{(t)}), \sum_{k\in [K]} \nabla J_k(\theta^{(t)}) \rangle.
\label{eq: weight optimization}
\end{align}
\begin{algorithm}[ht]
    \small
    \captionsetup{type=algorithm}
    \captionof{algorithm}{\textbf{Online Reward Weights Optimization}}
    \label{alg: optimization}
    \begin{algorithmic}[1]
      \State \textbf{Input:} Training set $\mathcal{D}_{\text{train}}$, initial policy parameters $\theta_{0}$, initial objective weights $\vw^{(0)}\!\in\!\mathbb{R}^{K}_{\ge 0}$
      \State \textbf{Hyperparameters:} batch size $B$, rollout size $G$, learning rate $\eta$, regularization factor $\mu$, maximum training steps $T$
      \For{$t\!=\!1$ \textbf{to} $T$}
        \State Sample mini-batch $\mathcal{D}_{b}\subset\mathcal{D}_{\text{train}}$ of size $B$
        \ForAll{query $q\in\mathcal{D}_{b}$}
          \State Sample $G$ answers $\{y_{i}\}_{i=1}^{G}\sim\pi_{\theta_{t-1}}(\cdot\mid q)$
          \State Compute reward vectors $\{\vr_{i}\}_{i=1}^{G}$ for each $y_{i}$
        \EndFor
        \State Obtain gradients $\{\nabla J_{i}(\theta_{t-1}, \mathcal{D}_{b})\}_{i=1}^{K}$
        \State Compute influence signal $\mI^{(t)}$ \Comment{\textcolor{darkgreen}{See Eq.~\ref{eq: weight optimization}}}
        \State \textbf{Update weights:}
        \State \hspace{1em} $\vw^{(t)} \leftarrow \vw^{(t-1)} \odot \exp\!\big(\tfrac{\eta^{(t)}\,\mI^{(t)}}{\mu}\big)$
        \State \hspace{1em} $\vw^{(t)} \leftarrow \vw^{(t)} \big/ \sum_{i} w^{(t)}_{i}$
        \State Compute scalar rewards $\{\,\vw^{(t)\!\top}\vr_{i}\,\}_{i=1}^{G}$
        \State Update $\theta_{t-1}$ via chosen RL objective function $J(\theta_{t-1})$ using scalarized rewards
      \EndFor
    \end{algorithmic}
\end{algorithm}

\begin{remark}
The $I_i^{(t)}$ measures the total influence that learning objective $i$ has on the remaining $K-1$ objectives ($\langle \nabla J_i, \sum_{k\neq i} \nabla J_k\rangle $) plus the magnitude of its own gradient ($\|\nabla J_i\|_2^2$). Intuitively, similar to prior studies in data influence \citep{pmlr-v70-koh17a} and domain adaptation \citep{doge}, we upweight an objective when it has high learning potential on other objectives (high influence) and has not been learned enough yet (high gradient magnitude).
\end{remark}

\subsection{Convergence Analysis}
A natural concern in weight updating is whether the weights may collapse to zero or explode to the boundary value of one within a single update step. To address this concern, we provide a convergence analysis for Eq.~\ref{eq: weight optimization}.
We first make some standard assumptions used across existing RL \citep{NIPS2017_353de269, NEURIPS2020_cc9b3c69, 10.1007/s10994-023-06303-2} and multi-objective optimization studies \citep{NEURIPS2022_f91bd64a, liu2024stochastic, Xu_Ju_Liu_Yang_2025}.

\begin{assumption}[Lipschitz Continuity]
    \label{ass: lipschitz continuity}
    The policy $\pi_\theta$ is differentiable with respect to $\theta$ and the output $\log\pi_\theta(a\mid s)$ is L-Lipschitz and has bounded norm $\|\nabla \log\pi_\theta(a\mid s)\| \leq B$ for any $\theta$.
\end{assumption}

\begin{assumption}[Bounded Reward]
    \label{ass: bounded reward}
    The reward is bounded in the range $[0, r_{\text{max}}]$.
\end{assumption}
\begin{assumption}[Learning Rate]
    \label{ass: bounded learning rate}
    The learning rate $\eta^{(t)}$ is non-increasing and non-negative.
\end{assumption}

\begin{assumption}[Bounded Policy Gradient]
\label{ass: bounded policy gradient}
The true gradient (Eq.~\ref{eq: reinforce}) of the objective function is bounded $\|\nabla J_k(\theta)\| \leq C \leq \infty$ which is established by $\|\nabla \log\pi_{\theta}(a\mid s) \| < B$ (Assumption~\ref{ass: lipschitz continuity}) and $G_t = \sum_{l=0}^{\infty} \gamma^l r_{t+l+1} \leq \sum_{l=0}^{\infty} \gamma^l r_{\text{max}} = \frac{r_{\text{max}}}{1-\gamma}$ (Assumption~\ref{ass: bounded reward}). 
\end{assumption}

\begin{lemma}
Recall the weight update rule defined in Eq.~\ref{eq: weight optimization}, its true weight takes the closed form:
\begin{align*}
w_i^{(T)} = \frac{w^{(0)}_i\exp\big(\sum_{t=1}^T \tau^{(t)}I_i^{(t)} \big)}{\sum_{k\in [K]}w_k^{(0)}\exp\big(\sum_{t=1}^T \tau^{(t)} I_k^{(t)} \big)},\\
\text{where }\tau^{(t)} = \frac{\eta^{(t)}}{\mu} 
\end{align*}
\end{lemma}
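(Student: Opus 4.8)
The plan is to prove the closed form by induction on $T$, separating the multiplicative update into two bookkeeping pieces: an unnormalized cumulative product and a renormalization map, the latter of which is invariant under positive rescaling. This is the standard structure behind exponentiated-gradient / multiplicative-weights updates, and once it is set up the rest is a one-line telescoping of products of exponentials.

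First I would introduce the renormalization operator $N(\vv) = \vv / \sum_k v_k$ for $\vv \in \R^K$ with positive entries, so that the update in Eq.~\ref{eq: weight optimization} reads compactly as $\vw^{(t)} = N\!\big(\vw^{(t-1)} \odot \exp(\tau^{(t)} \mI^{(t)})\big)$ with $\tau^{(t)} = \eta^{(t)}/\mu$. I would note that all iterates stay strictly positive, since $\vw^{(0)} \in \Delta^K$ has positive entries (otherwise restrict to its support) and $\exp(\cdot) > 0$ coordinatewise, so $N$ is always well-defined along the trajectory. I would also record the elementary fact that $N(c\vv) = N(\vv)$ for every scalar $c > 0$, which is the crux of the argument.

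Next I would define the unnormalized iterate $\tilde\vw^{(0)} = \vw^{(0)}$ and $\tilde\vw^{(t)} = \tilde\vw^{(t-1)} \odot \exp(\tau^{(t)} \mI^{(t)})$, and show by a short induction using $\exp(a)\exp(b) = \exp(a+b)$ that $\tilde w_i^{(T)} = w_i^{(0)} \exp\!\big(\sum_{t=1}^T \tau^{(t)} I_i^{(t)}\big)$. Then I would prove $\vw^{(t)} = N(\tilde\vw^{(t)})$ by induction on $t$: the base case holds because $\vw^{(0)} \in \Delta^K$ is already normalized (matching the empty-sum convention $\exp(0)=1$ at $T=0$); for the inductive step, substitute $\vw^{(t-1)} = N(\tilde\vw^{(t-1)}) = \tilde\vw^{(t-1)}/S_{t-1}$ with $S_{t-1} = \sum_k \tilde w_k^{(t-1)} > 0$ into the update, pull the scalar $1/S_{t-1}$ out through $\odot$, and invoke $N(c\vv)=N(\vv)$ to get $\vw^{(t)} = N\big(\tilde\vw^{(t-1)} \odot \exp(\tau^{(t)}\mI^{(t)})\big) = N(\tilde\vw^{(t)})$. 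Combining the two inductions at $t = T$ and writing $N(\tilde\vw^{(T)})$ out coordinatewise gives exactly the claimed formula.

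There is essentially no analytic obstacle here — the statement is a purely algebraic consequence of the multiplicative structure of the update — so the only care required is in the bookkeeping: distinguishing normalized from unnormalized iterates, confirming positivity so every division is legitimate, and checking the empty-product/empty-sum conventions at $T=0$ so the base case lines up. One could alternatively skip the auxiliary $\tilde\vw^{(t)}$ and unroll the recursion directly, but introducing $N$ together with its scale-invariance makes the argument cleaner and far less error-prone.
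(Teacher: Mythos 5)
Your proposal is correct and is essentially the paper's own argument: both proceed by induction on $T$, and your scale-invariance observation $N(c\vv)=N(\vv)$ is exactly the cancellation of the common normalization denominator that the paper performs explicitly in its inductive step. Your separation into an unnormalized iterate $\tilde{\vw}^{(t)}$ plus a renormalization map is a slightly cleaner bookkeeping of the same computation, but not a different route.
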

\begin{proof}
The lemma holds when $T=1$: 
$$
w_i^{(1)} = \frac{w^{(0)}_i\exp\big( \tau^{(1)}I_i^{(1)} \big)}{\sum_{k\in [K]}w_k^{(0)}\exp\big(\tau^{(1)} I_k^{(1)} \big)}.
$$
Suppose the lemma holds for all values up to $T-1$, for step at $T$, we have:
\begin{align*}
    &w_i^{(T)} = \frac{w_i^{(T-1)}\exp(\tau^{(T)}I_i^{(T)})}{\sum_k w_k^{(T-1)}\exp(\tau^{(T)}I_k^{(T)})} \\
    &= \frac{\frac{w^{(0)}_i\exp\big(\sum_{t=1}^{T-1} \tau^{(t)}I_i^{(t)} \big)}{\sum_{k\in [K]}w_k^{(0)}\exp\big(\sum_{t=1}^{T-1} \tau^{(t)} I_k^{(t)} \big)}\exp(\tau^{(T)}I_i^{(T)})}{\sum_k\frac{w^{(0)}_k\exp\big(\sum_{t=1}^{T-1} \tau^{(t)}I_k^{(t)} \big)}{\sum_{k\in [K]}w_k^{(0)}\exp\big(\sum_{t=1}^{T-1} \tau^{(t)} I_k^{(t)} \big)} \exp(\tau^{(T)}I_k^{(T)})} \\
    &= \frac{w^{(0)}_i\exp\big(\sum_{t=1}^{T-1} \tau^{(t)}I_i^{(t)} \big) \exp(\tau^{(T)}I_i^{(T)})}{\sum_k w^{(0)}_k\exp\big(\sum_{t=1}^{T-1} \tau^{(t)}I_k^{(t)} \big) \exp(\tau^{(T)}I_k^{(T)})} \\
    &= \frac{w^{(0)}_i\exp\big(\sum_{t=1}^T \tau^{(t)}I_i^{(t)} \big)}{\sum_{k}w_k^{(0)}\exp\big(\sum_{t=1}^T \tau^{(t)} I_k^{(t)} \big)}
\end{align*}
\end{proof}

\begin{theorem}
Suppose Assumptions~\ref{ass: lipschitz continuity}-\ref{ass: bounded policy gradient} hold and that the step-size sequence of $\tau$ converges $|\sum_{t=1} \tau^{(t)} - \ell| < \epsilon$ for every arbitrarily small positive number $\epsilon$. Then, for every pair of objectives $i, j \in [K]$, their weight ratio is uniformly bounded: $\smash{w_i^{(T)}/w_j^{(T)}} = \mathcal{O}(1), \;\forall T\in \sN$.
\label{theorem: convergence analysis}
\end{theorem}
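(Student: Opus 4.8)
The plan is to use the closed form from the Lemma, which expresses $w_i^{(T)}/w_j^{(T)} = \exp\bigl(\log(w_i^{(0)}/w_j^{(0)}) + \sum_{t=1}^T \tau^{(t)}(I_i^{(t)} - I_j^{(t)})\bigr)$. Since the initial weights are fixed positive constants, it suffices to bound the exponent, and for that it suffices to bound $\bigl|\sum_{t=1}^T \tau^{(t)}(I_i^{(t)} - I_j^{(t)})\bigr|$ uniformly in $T$. First I would bound the $I_i^{(t)}$ themselves: by definition $I_i^{(t)} = \langle \nabla J_i(\theta^{(t)}), \sum_{k\in[K]} \nabla J_k(\theta^{(t)})\rangle$, so by Cauchy--Schwarz and Assumption~\ref{ass: bounded policy gradient} (each $\|\nabla J_k(\theta)\| \le C$), we get $|I_i^{(t)}| \le \|\nabla J_i\|\cdot\sum_k\|\nabla J_k\| \le K C^2$, hence $|I_i^{(t)} - I_j^{(t)}| \le 2KC^2 =: M$ for every $t$. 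Note $C = B r_{\max}/(1-\gamma)$ is itself an explicit constant built from Assumptions~\ref{ass: lipschitz continuity}--\ref{ass: bounded reward}, so $M$ is a genuine constant independent of $T$.

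Next I would use the summability of the step sizes. The hypothesis $|\sum_{t=1}^{\infty}\tau^{(t)} - \ell| < \epsilon$ for arbitrarily small $\epsilon$ means the series $\sum_{t\ge1}\tau^{(t)}$ converges to a finite limit $\ell$; combined with $\tau^{(t)} = \eta^{(t)}/\mu \ge 0$ (Assumption~\ref{ass: bounded learning rate}, with $\mu > 0$), the partial sums satisfy $\sum_{t=1}^T \tau^{(t)} \le \ell$ for all $T$. Therefore
\[
\Bigl|\sum_{t=1}^T \tau^{(t)}\bigl(I_i^{(t)} - I_j^{(t)}\bigr)\Bigr| \;\le\; \sum_{t=1}^T \tau^{(t)}\,\bigl|I_i^{(t)} - I_j^{(t)}\bigr| \;\le\; M\sum_{t=1}^T \tau^{(t)} \;\le\; M\ell,
\]
which is a finite bound uniform in $T$. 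Plugging this into the closed form gives $w_i^{(T)}/w_j^{(T)} \le (w_i^{(0)}/w_j^{(0)})\exp(M\ell) = \mathcal{O}(1)$, and the same argument with $i,j$ swapped gives the lower bound $\ge (w_i^{(0)}/w_j^{(0)})\exp(-M\ell) > 0$, so in particular no weight collapses to $0$ or explodes to $1$ in the limit. This establishes the claim.

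The routine parts are the Cauchy--Schwarz estimate and the manipulation of the closed form; the only place requiring care is making sure the constants are genuinely $T$-independent — in particular that Assumption~\ref{ass: bounded policy gradient}'s bound $C$ holds uniformly over all iterates $\theta^{(t)}$ (which it does, since the bound there depends only on $B$, $r_{\max}$, and $\gamma$, not on $\theta$), and that the convergence hypothesis on $\tau$ really does give a uniform bound on the partial sums rather than merely a limit. I would also remark that the argument shows something slightly stronger than stated: $\lim_{T\to\infty} w_i^{(T)}/w_j^{(T)}$ exists, since $\sum_t \tau^{(t)}(I_i^{(t)} - I_j^{(t)})$ is absolutely convergent (dominated by $M\sum_t\tau^{(t)} < \infty$). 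If one wanted to avoid assuming the $I$'s are merely bounded, the main obstacle would be controlling $\|\nabla J_k(\theta^{(t)})\|$ along the actual trajectory, but Assumption~\ref{ass: bounded policy gradient} hands us exactly that, so there is no real obstacle here.
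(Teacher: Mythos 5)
Your proposal is correct and follows essentially the same route as the paper's proof: apply the closed form from the Lemma, bound $|I_i^{(t)} - I_j^{(t)}| \le 2KC^2$ via Cauchy--Schwarz and Assumption~\ref{ass: bounded policy gradient}, and use the convergence of $\sum_t \tau^{(t)}$ to $\ell$ to obtain a $T$-independent bound on the exponent. Your added observations (the matching lower bound by symmetry and the existence of the limit via absolute convergence) are correct refinements but do not change the argument.
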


\begin{table*}[ht]
\centering
\small
\setlength{\tabcolsep}{2pt}
\setlength{\belowcaptionskip}{-5pt}
\renewcommand{\arraystretch}{0.92}
\begin{adjustbox}{width=\linewidth}
\begin{tabular}{@{}l c c c c c @{}}
\toprule
\textbf{Method $\rightarrow$}
  & \multicolumn{1}{c}{\textbf{PAMA}}
  & \multicolumn{1}{c}{\textbf{Accuracy-focused}}
  & \multicolumn{1}{c}{\textbf{Balanced}}
  & \multicolumn{1}{c}{\textbf{Efficiency-focused}}
  & \multicolumn{1}{c}{\textbf{Gradient-based(Ours)}} \\
\cmidrule(lr){2-2}\cmidrule(lr){3-3}\cmidrule(lr){4-4}\cmidrule(lr){5-5}\cmidrule(lr){6-6}
\textbf{Training RL $\downarrow$}
  & \multicolumn{5}{c}{\textbf{\emph{Accuracy $\uparrow$ \, / \, Response Length $\downarrow$ \, / \, Clarity $\uparrow$}}} \\
\midrule
GRPO
  & 0.360 / 1934 / 0.387
  & 0.836 / 831 / 0.948
  & 0.830 / 861 / 0.960
  & 0.835 / 842 / 0.950
  & \textbf{0.836} / \textbf{650} / \textbf{0.980} \\
REINFORCE
  & 0.341 / 1929 / 0.369
  & 0.764 / 1375 / 0.830
  & 0.760 / 1336 / 0.850
  & 0.755 / 1361 / 0.825
  & \textbf{0.802} / \textbf{1202} / \textbf{0.868} \\
RLOO
  & 0.355 / 1938 / 0.385
  & 0.815 / 1100 / 0.905
  & 0.820 / 847 / 0.937
  & 0.830 / 824 / 0.940
  & 0.820 / \textbf{701} / \textbf{0.980} \\
\bottomrule
\end{tabular}
\end{adjustbox}
\vspace{-2mm}
\caption{Pareto fronts obtained from gradient-based weight optimization compared to PAMA and fixed-weight baselines. $\uparrow$ and $\downarrow$ indicate optimization direction. Each value represents the average performance across the entire Pareto front under a given training setup. \textbf{Clearly, the gradient-based weighting approach yields superior Pareto fronts compared to all baselines across both GRPO and REINFORCE training setups}, and achieves improved performance on two key objectives (conciseness and clarity) under RLOO training. }
\label{table: optimization}
\end{table*}

\begin{proof}
\begin{align*}
    \frac{w_i^{(T)}}{w_j^{(T)}} &= \frac{w_i^{(0)}\exp\big(\sum_{t=1}^T \tau^{(t)}I_i^{(t)} \big)}{w_j^{(0)}\exp\big(\sum_{t=1}^T \tau^{(t)}I_j^{(t)} \big)}  \\
    &= \frac{w_i^{(0)}}{w_j^{(0)}}\exp\Big(\sum_{t=1}^T \tau^{(t)}(I_i^{(t)} - I_j^{(t)})\Big) \\
    &\leq \frac{w_i^{(0)}}{w_j^{(0)}} \exp\Big(\sum_{t=1}^T \tau^{(t)}2KC^2\Big) \\
    &\leq \frac{w_i^{(0)}}{w_j^{(0)}}\exp(2KC^2\ell) = \mathcal{O}(1)
\end{align*}
\end{proof}
\begin{remark}
The ratio $\smash{w_i^{(T)}/w_j^{(T)}}$ measures the cumulative advantage that objective $i$ has gained over objective $j$ throughout training, determined by both their initial weights $w^{(0)}$ and the accumulated influence across iterations $I^{(t)}$. Intuitively, an objective that is more influential on the gradient update will naturally gain a higher weight relative to a less influential objective.
Keeping this ratio uniformly bounded has two merits. First, it offers numerical stability. A bounded ratio prevents weight collapse or explosion from the exponential update, ensuring that the updated weights $\vw$ remain properly constrained and ultimately converge to a valid multi‑objective equilibrium. Second, it has practical guidance for hyperparameter selection. Initializing $\vw$ uniformly and employing a learning rate scheduler in which $\eta^{(t)}$ converges, so that each step updates the weights within safe and predictable bounds.
\end{remark}

\subsection{Experiment Setup}
As suggested by Theorem~\ref{theorem: convergence analysis}, we initialize the reward weights using a balanced (uniform) setting and use a polynomial learning rate scheduler whose learning rate sequence converges, instead of the constant scheduler used in \S\ref{sec: vanilla reward adaptation}.
To maintain the linearity between the overall reward and the per-objective rewards, we do not add the token-level KL divergence penalty from the reference model in each reward. Furthermore, standard approaches for preference finetuning LLMs with REINFORCE, such as REINFORCE++ \citep{hu2025reinforceefficientrlhfalgorithm}, typically employ clipping mechanisms that render the objective non-differentiable and break the linearity assumptions made in Eq.~\ref{eq: reinforce}. Therefore, in our experiments, we set both the clip range $\epsilon$ \citep{schulman2017proximalpolicyoptimizationalgorithms} and dual clip constant $c$ \citep{Ye_Liu_Sun_Shi_Zhao_Wu_Yu_Yang_Wu_Guo_Chen_Yin_Zhang_Shi_Wang_Fu_Yang_Huang_2020} high ($\epsilon=c=100$) to practically disable clipping.\footnote{We also experimented with standard clipping settings ($\epsilon=0.2, c=3$) and observed that disabling clipping did lead to improved training results.} To improve computational efficiency, we compute gradients only from middle layers, as these layers typically encode the most abstract and generalizable knowledge patterns \citep{grosse2023studyinglargelanguagemodel}. All other experiment settings follow \S\ref{subsec: vanilla experiment setup}.

\subsection{Results}

\begin{figure}[ht]
  \centering
  \includegraphics[width=\linewidth]{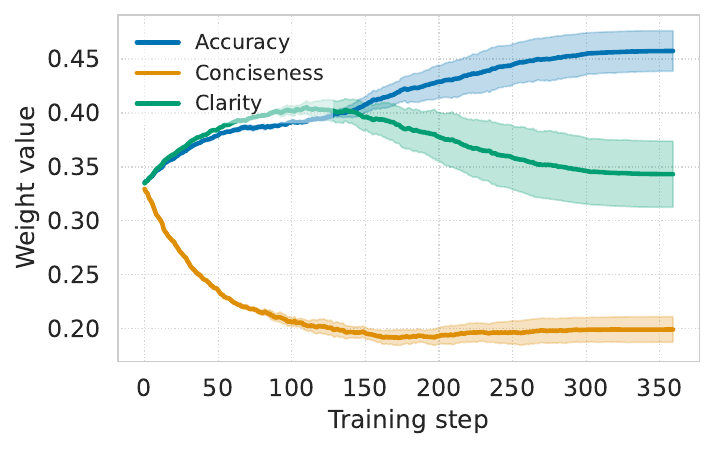}
  \captionsetup{aboveskip=2pt, belowskip=0pt}
  \caption{Reward weight evolution over training.}
  \label{fig: weight evolution}
\end{figure}
\begin{figure*}[ht]
  \setlength{\belowcaptionskip}{-3pt}
  \centering

  % Tighten space above each row subcaption, increase space below it
  \captionsetup[subfigure]{skip=1pt, belowskip=8pt}

  % -------------------- Row (a) --------------------
  \begin{subfigure}[t]{\textwidth}
    \centering
    \begin{subfigure}[t]{0.333\textwidth}
      \includegraphics[width=\linewidth]{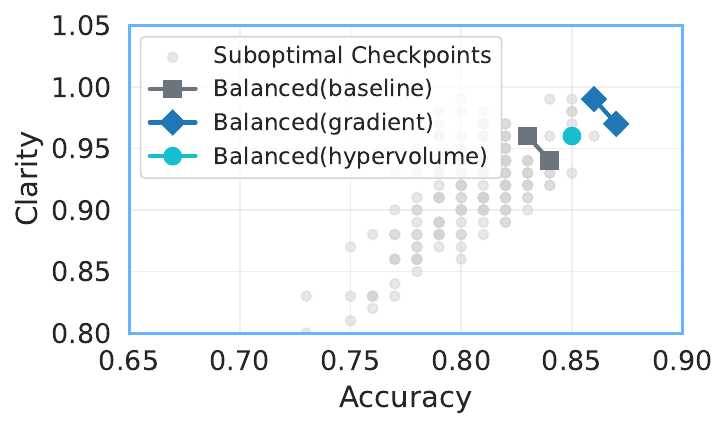}
    \end{subfigure}%
    \begin{subfigure}[t]{0.333\textwidth}
      \includegraphics[width=\linewidth]{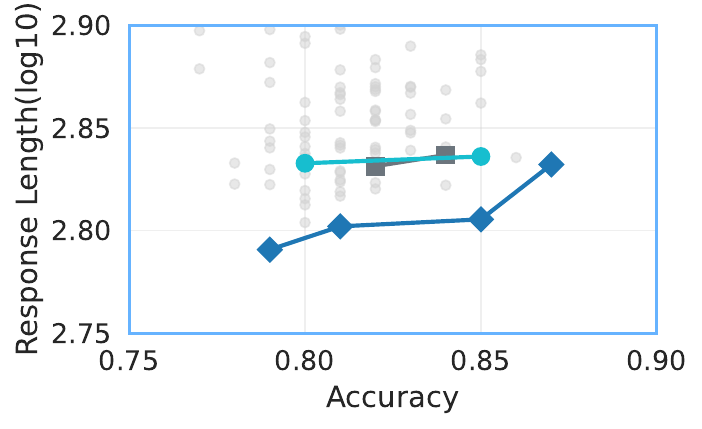}
    \end{subfigure}%
    \begin{subfigure}[t]{0.333\textwidth}
      \includegraphics[width=\linewidth]{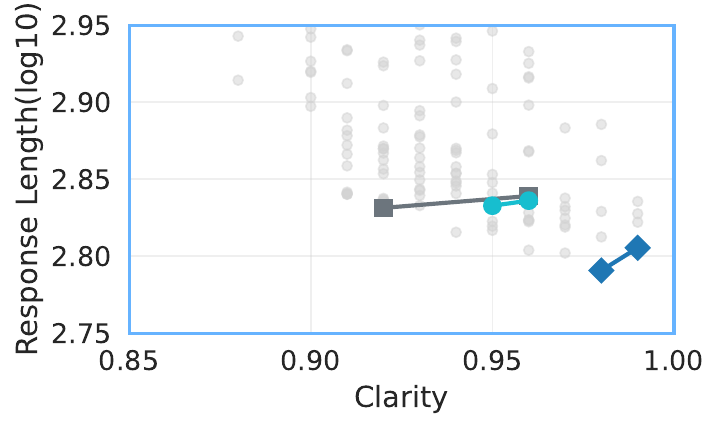}
    \end{subfigure}
    \caption{Pareto fronts of Qwen3-8B trained with GRPO on MATH500 \citep{lightman2023lets} problems.}
  \end{subfigure}

  % -------------------- Row (b) --------------------
  \begin{subfigure}[t]{\textwidth}
    \centering
    \begin{subfigure}[t]{0.333\textwidth}
      \includegraphics[width=\linewidth]{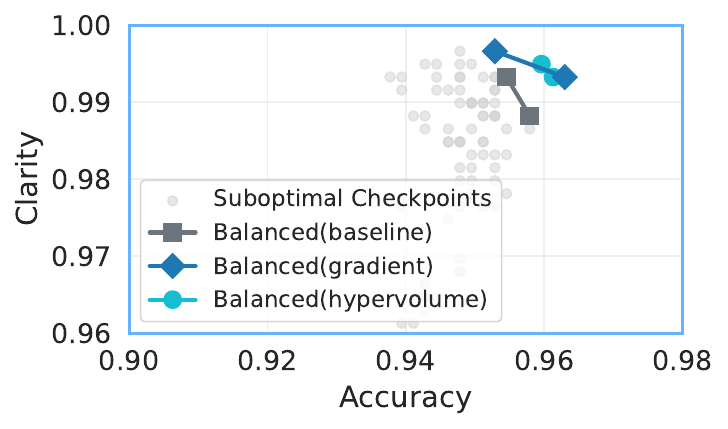}
    \end{subfigure}%
    \begin{subfigure}[t]{0.333\textwidth}
      \includegraphics[width=\linewidth]{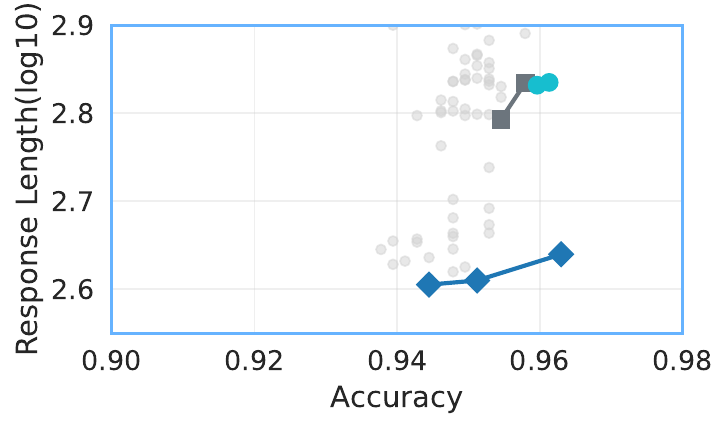}
    \end{subfigure}%
    \begin{subfigure}[t]{0.333\textwidth}
      \includegraphics[width=\linewidth]{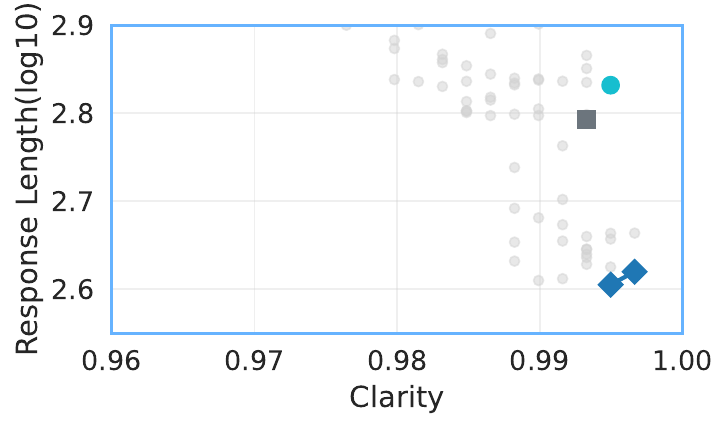}
    \end{subfigure}
    \caption{Pareto fronts of Qwen3-8B trained with GRPO on MATH \citep{mathlighteval} algebra problems.}
  \end{subfigure}

  \vspace{-10pt}
  \caption{Pareto fronts under different datasets.}
  \label{fig: extensive experiment dataset}
\end{figure*}

\begin{figure}[ht]
  \centering
  \includegraphics[width=\linewidth]{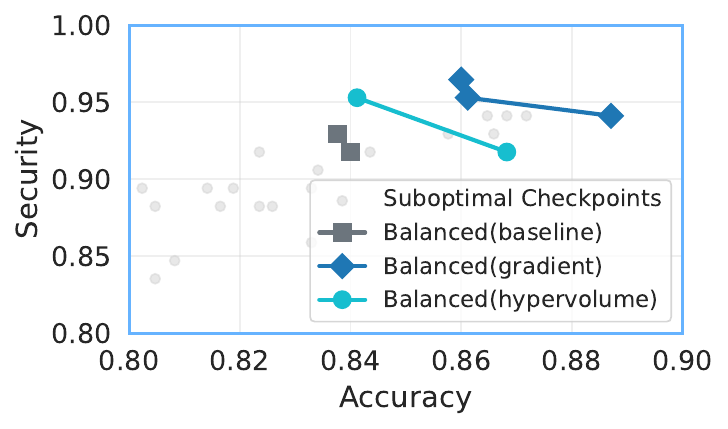}
  \captionsetup{aboveskip=2pt, belowskip=0pt}
  \caption{Pareto fronts of Qwen3-8B trained with SafeSQL \citep{yao2025traininglanguagemodelsgenerate} problems.}
  \label{fig: safesql}
\end{figure}

\begin{figure*}[ht]
  \setlength{\belowcaptionskip}{-3pt}
  \centering

  % Tighten space above each row subcaption, increase space below it
  \captionsetup[subfigure]{skip=1pt, belowskip=8pt}

  % -------------------- Row (b) --------------------
  \begin{subfigure}[t]{\textwidth}
    \centering
    \begin{subfigure}[t]{0.333\textwidth}
      \includegraphics[width=\linewidth]{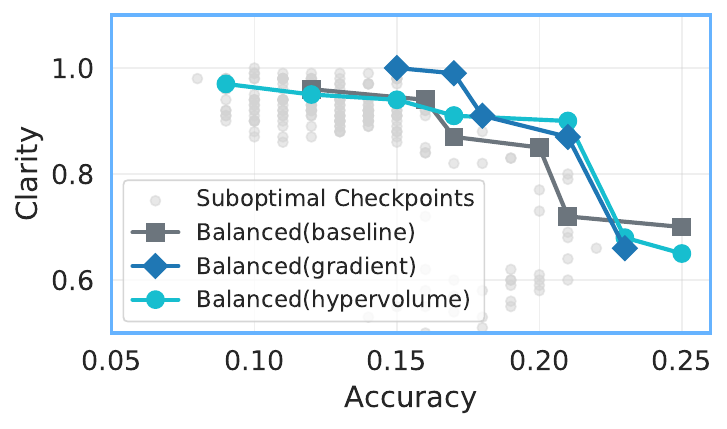}
    \end{subfigure}%
    \begin{subfigure}[t]{0.333\textwidth}
      \includegraphics[width=\linewidth]{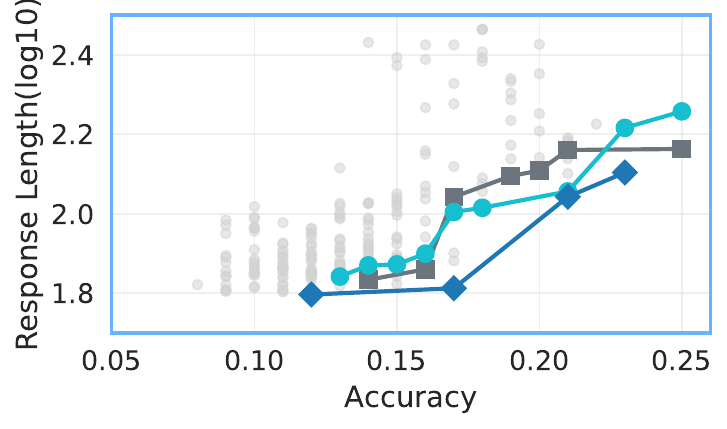}
    \end{subfigure}%
    \begin{subfigure}[t]{0.333\textwidth}
      \includegraphics[width=\linewidth]{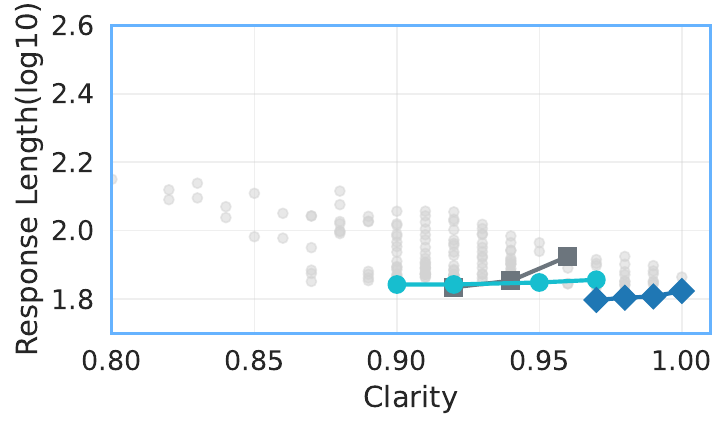}
    \end{subfigure}
    \caption{Pareto fronts of Deepseek-7B \citep{deepseekai2024deepseekllmscalingopensource} trained with GRPO on Math500.}
  \end{subfigure}

  % -------------------- Row (c) --------------------
  \begin{subfigure}[t]{\textwidth}
    \centering
    \begin{subfigure}[t]{0.333\textwidth}
      \includegraphics[width=\linewidth]{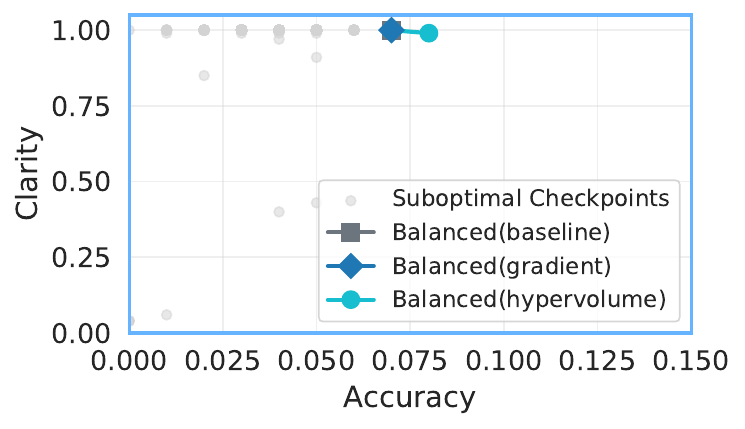}
    \end{subfigure}%
    \begin{subfigure}[t]{0.333\textwidth}
      \includegraphics[width=\linewidth]{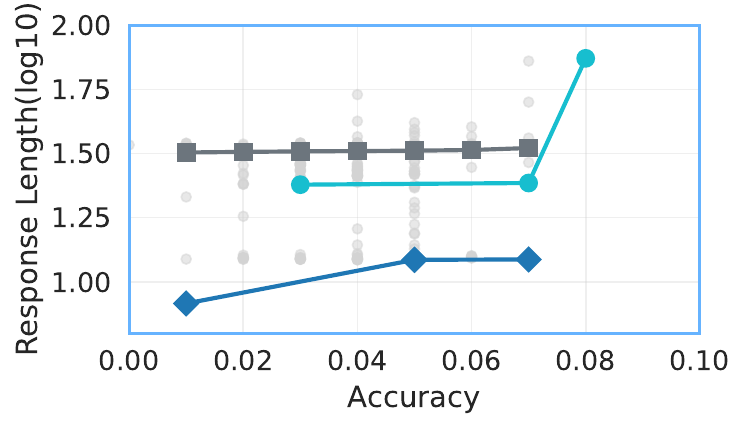}
    \end{subfigure}%
    \begin{subfigure}[t]{0.333\textwidth}
      \includegraphics[width=\linewidth]{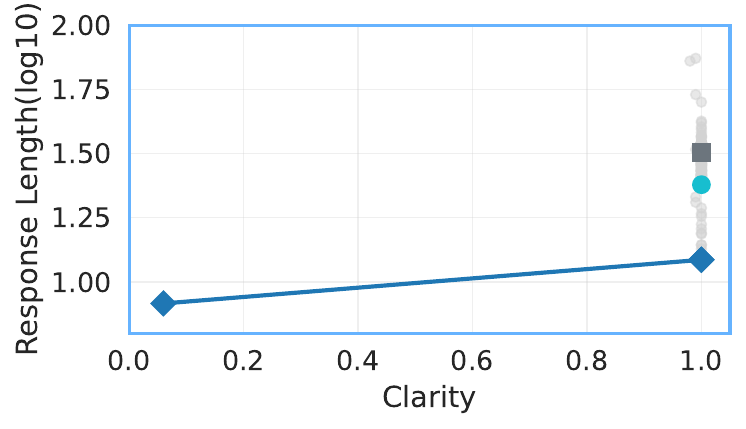}
    \end{subfigure}
    \caption{Pareto fronts of Mistral-7B \citep{jiang2023mistral7b} trained with GRPO on Math500.}
    \label{fig: mistral-7b pareto fronts}
  \end{subfigure}

  % -------------------- Row (d) --------------------
  \begin{subfigure}[t]{\textwidth}
    \centering
    \begin{subfigure}[t]{0.333\textwidth}
      \includegraphics[width=\linewidth]{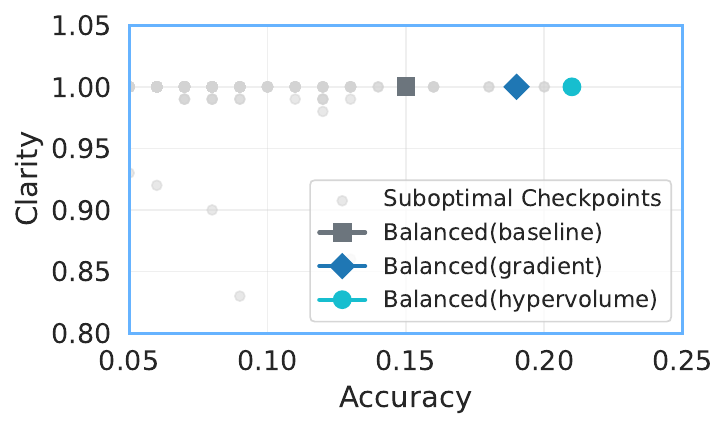}
    \end{subfigure}%
    \begin{subfigure}[t]{0.333\textwidth}
      \includegraphics[width=\linewidth]{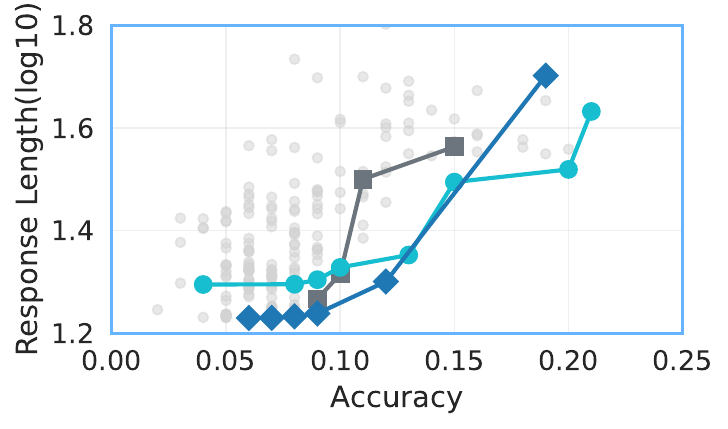}
    \end{subfigure}%
    \begin{subfigure}[t]{0.333\textwidth}
      \includegraphics[width=\linewidth]{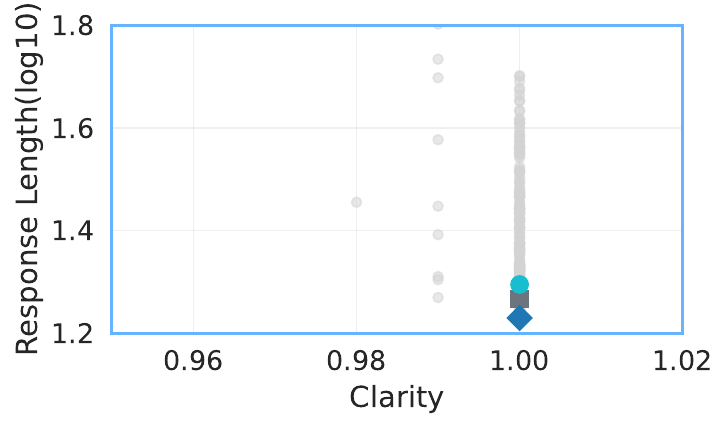}
    \end{subfigure}
    \caption{Pareto fronts of Llama3-8B \citep{grattafiori2024llama3herdmodels} trained with GRPO on Math500.}
    \label{fig: llama3-8b pareto fronts}
  \end{subfigure}

  \vspace{-10pt}
  \caption{Pareto fronts under different models.}
  \label{fig: extensive experiment models}
\end{figure*}

We compare our gradient-based weight optimization approach against baselines trained with different weight configurations in \autoref{table: optimization}. Our method consistently outperforms all baselines in multi-objective alignment across different online RL algorithms. The corresponding Pareto fronts visualized in \autoref{fig: optimization visualization} (Appendix~\ref{appendix: visualization}) further support these findings, demonstrating that our method generates superior Pareto fronts that dominate all baseline approaches under both GRPO and REINFORCE training.

\paragraph{Learning differs for objectives.} We analyze the evolution of objective weights during training to better understand the relative importance of each objective. As shown in \autoref{fig: weight evolution}, the weight for conciseness rapidly converges to approximately 0.2, with the lost weight mostly shifted to accuracy. Consequently, the accuracy weight exhibits continuous growth, which aligns with our intuition that accuracy is a more challenging objective requiring continual learning. In contrast, conciseness improvements can be achieved rapidly (see \autoref{fig: motivation} in Appendix~\ref{appendix: motivation justification}) and thus require less learning effort.
The higher weights for accuracy and clarity compared to conciseness also support our preliminary findings that accuracy and clarity objectives are highly intertwined in the optimization process, effectively playing similar roles in model updates as evidenced by their consistently low KL-divergence trajectories shown in \autoref{fig: kl divergence}, and driving LLM learning towards the same direction. Conversely, conciseness works orthogonally to these objectives, applying a different influence on model training. We provide further analysis in Appendix~\ref{appendix: motivation justification}.

\section{Generalizability and Convergence Rate Tests}
\label{sec: extensive experiment}
\begin{table}[ht]
\centering
\small
\resizebox{0.98\linewidth}{!}{%
\begin{tabular}{@{}lcc@{\hskip 6pt}cc@{}}
\toprule
\textbf{Method $\rightarrow$}
  & \multicolumn{2}{c}{\textbf{Hypervolume-guided}}
  & \multicolumn{2}{c}{\textbf{Gradient-based}} \\ 
\cmidrule(lr){2-3}\cmidrule(lr){4-5}
\textbf{Online RL $\downarrow$}
  & \multicolumn{1}{c}{Baseline}
  & \multicolumn{1}{c}{Ours ($\Delta$)}
  & \multicolumn{1}{c}{Baseline}
  & \multicolumn{1}{c}{Ours ($\Delta$)} \\
\midrule
GRPO       & 66.5 & \textbf{64.0 (-2.5}) & 71.5 & \textbf{62.6 (-8.9}) \\
REINFORCE  & 63.8 & 65.5 (+1.7)          & 68.5 & \textbf{67.2 (-1.3}) \\
RLOO       & 61.1 & 62.5 (+1.4)          & 68.7 & \textbf{60.5 (-8.2}) \\
\bottomrule
\end{tabular}
}
\caption{Average number of training steps required to reach Pareto fronts for Hypervolume-Guided and Optimization methods versus their respective baselines.}
\label{table: pareto front steps}
\end{table}
To demonstrate the training efficiency of dynamic reward weighting, we compute the average number of training steps required to achieve the current Pareto fronts. As illustrated in \autoref{table: pareto front steps}, while the hypervolume-guided method that proactively pushes Pareto fronts has only marginal efficiency gains compared to baselines, the gradient-based method consistently has a higher convergence rate, reducing the required steps by 6.1 on average across RL algorithms.

We further validate the generalizability of our two methods by extending our experiments to two additional datasets spanning mathematical and coding problems (\autoref{fig: extensive experiment dataset} and \autoref{fig: safesql}) and three model families (\autoref{fig: extensive experiment models}). For fair comparison, we use the same constant learning rate across both methods and the baseline. We adopt identical objective sets for the mathematical datasets. For coding tasks, we use SafeSQL, a benchmark designed to evaluate SQL injection vulnerability. Each task requires generating code that constructs SQL queries which both retrieve correct results from a given database and are robust to injection attacks. Accordingly, SafeSQL involves two objectives: \textit{accuracy} and \textit{security}. We report the SafeSQL results in \autoref{fig: safesql}.
\textbf{Similar to our main experiment results, our two methods achieve superior trade-offs compared to the baseline, with the gradient-based weighting showing the best overall performance.} 

Notably, several panels show the Pareto front collapsing to a single point (e.g., accuracy–clarity plots for Mistral-7B in \autoref{fig: mistral-7b pareto fronts} and Llama3-8B in \autoref{fig: llama3-8b pareto fronts}). This reflects a ceiling effect where clarity quickly saturates near its maximum and exhibits little variation across runs. Consequently, Pareto-optimal checkpoints become indistinguishable along the clarity dimension and overlap in two-dimensional projections. The meaningful trade-offs are instead captured by other objective pairs such at accuracy versus response length.

\section{Conclusion and Future Work}
We found that different objectives in multi-objective LLM alignment require varying learning efforts and therefore proposed dynamic reward weighting. We introduced two dynamic reward weighting approaches of increasing sophistication: hypervolume-guided weight adaptation and gradient-based weight optimization. Our experiments across multiple online RL algorithms, datasets, and model families demonstrated that both methods consistently outperform fixed-weight linear scalarization baselines, achieving superior Pareto optimal solutions while improving training efficiency.

\paragraph{Future Work.} While our dynamic reward weighting proves effective across various models, its success depends critically on a model's inherent capacity to simultaneously improve multiple objectives. For instance, when we evaluated Ministral-8B-Instruct \citep{mistralai2024ministral8b_instruct_2410} and Llama-3.1-8B-Instruct \citep{grattafiori2024llama3herdmodels} under identical multi-objective settings, we consistently observe that reducing response length leads to accuracy degradation, revealing a fundamental conflict. In such cases, our methods yield limited gains. We think this limitation likely stems from either performance saturation due to pretraining or insufficient model capacity to effectively learn certain objectives. Both issues are difficult to resolve through post-training alone. Therefore, these findings suggest two promising research directions: (1) developing pretraining strategies that equip models to learn and balance diverse objectives more effectively, and (2) designing post-training approaches for determining which objectives should be prioritized when inreducible conflicts arise.

\section*{Acknowledgement}
We thank the anonymous reviewers and the action
editor, Jesse Thomason, for their valuable feedback.
This work was partially supported by NSF IIS-2119531, IIS-2137396, IIS-2142827, IIS-2234058, and Coefficient Giving. We also appreciate the support from Amazon, the Foundation Models and Applications Lab of Lucy Institute, and ND-IBM Tech Ethics Lab.

\bibliography{reference}
\bibliographystyle{acl_natbib}

\onecolumn

\appendix
\section{Appendix}
\subsection{Proof for Gradient-Based Weight Optimization}
\label{appendix: proof}
We first write optimizing reward weights $\vw\in \mathbb{R}^K$ along the training of the policy model $\theta$ as a bilevel-optimization problem:
\begin{align*}
\vw \in \argmin_{\vw\in \mathbb{R}^K} \sum_{i\in[K]}J_i(\theta^\ast(\vw)), \;\;s.t. \; \theta^\ast(\vw) \in \argmin_\theta \sum_{i\in[K]}w_iJ_i(\theta). 
\end{align*}
The lower-level objective $\theta(\vw)$ is updated using the weights $\vw$, and the higher-level objective $\vw$ is found based on the loss function of the updated model. With a greedy approximation of $\min_{\vw} \sum_i J_i(\theta^{(t)}(\vw))$ (for simplicity, we denote $\theta^{(t)}(\vw) = \theta^{(t)}$ hereafter), we search for the optimal reward weights $\vw^{(t)}$ at step $t$ to minimize the average loss over $K$ objectives at step $t+1$:
\begin{align}
    \argmin_{\vw \in \mathbb{R}^K} \bar{J}(\theta^{(t+1)}) = \argmin_{\vw \in \mathbb{R}^K} \sum_{i\in[K]}J_i(\theta^{(t+1)}) = \argmin_{\vw \in \mathbb{R}^K} \sum_{i\in[K]}\big[J_i(\theta^{(t+1)}) - J_i(\theta^{(t)})\big]
    \label{eq: loss difference}
\end{align}
From the first-order Taylor approximation, we have
\begin{align*}
J_i(\theta^{(t+1)}) &= J_i(\theta^{(t)}) + \nabla J_i(\theta^{(t)})\cdot(\theta^{(t+1)} - \theta^{(t)}) + \mathcal{O}(\|\theta^{(t+1)} - \theta^{(t)}\|) \\
&= J_i(\theta^{(t)}) + \nabla J_i(\theta^{(t)}) \cdot \Big[-\eta^{(t)}\sum_{i\in [K]} w^{(t)}_i\nabla J_i(\theta^{(t)})\Big] + \mathcal{O}(\|\vw\|),
\end{align*}
where $\mathcal{O}(\|\theta^{(t+1)} - \theta^{(t)}\|) = \mathcal{O}(\|\vw\|)$ as the high-order remainder from Tylor approximation. Therefore, we can write Eq.~\ref{eq: loss difference} as:
\begin{align*}
    \argmin_{\vw \in \mathbb{R}^K} \bar{J}(\theta^{(t+1)}) &= \argmin_{\vw \in \mathbb{R}^K} \sum_{i\in [K]} \nabla J_i(\theta^{(t)}) \cdot \Big[-\eta^{(t)}\sum_{k\in [K]} w^{(t)}_k\nabla J_k(\theta^{(t)})\Big] + \mathcal{O}(\|\vw\|) \\
    &= \argmin_{\vw \in \mathbb{R}^K} -\eta^{(t)}\sum_{i\in [K]} \nabla J_i(\theta^{(t)}) \cdot \Big[\sum_{k\in [K]} w^{(t)}_k\nabla J_k(\theta^{(t)})\Big]  + \mathcal{O}(\|\vw\|) \\
    &= \argmin_{\vw \in \mathbb{R}^K} -\eta^{(t)}\sum_{i\in [K]} w^{(t)}_i \Big[\nabla J_i(\theta^{(t)}) \cdot \sum_{k\in [K]} \nabla J_k(\theta^{(t)})\Big]  + \mathcal{O}(\|\vw\|) \\
    &= \argmin_{\vw \in \mathbb{R}^K} -\eta^{(t)} \sum_{i\in [K]} w^{(t)}_i I_i^{(t)} + \mathcal{O}(\|\vw\|), \quad \text{where}\; I_i^{(t)} \triangleq \langle \nabla J_i(\theta^{(t)}), \sum_{k\in [K]} \nabla J_k(\theta^{(t)}) \rangle \\
    &= \argmin_{\vw \in \mathbb{R}^K} -\eta^{(t)} \langle \vw, \mI^{(t)} \rangle + \mathcal{O}(\|\vw\|), \quad \text{where}\; \mI^{(t)} = [I^{(t)}_1, I^{(t)}_2, \ldots, I^{(t)}_K].
\end{align*}
Follow common pratice in mirror descent \citep{BECK2003167}, we estimate $\mathcal{O}(\|\vw\|)$ by introducing a regularization term with factor $\mu$ via Bregman divergence $D_h(\vw\|\vw^{(t-1)}) = h(\vw) - h(\vw^{(t-1)}) - \langle \nabla h(\vw^{(t-1)}), \vw - \vw^{(t-1)}\rangle$, where we employ the entropy function $h(\vw) = \sum_i w_i\ln w_i$. So the above equation becomes:
\begin{align}
\label{eq: final argmin expression}
\vw^{(t)} &:= \argmin_{\vw \in \mathbb{R}^K} -\eta^{(t)} \langle \vw, \mI^{(t)} \rangle + \mu \Big(h(\vw) - h(\vw^{(t-1)}) - \langle \nabla h(\vw^{(t-1)}), \vw - \vw^{(t-1)}\rangle\Big)\\ \nonumber
&=\argmin_{\vw \in \mathbb{R}^K} -\eta^{(t)} \langle \vw, \mI^{(t)} \rangle + \mu\Big(h(\vw) - \langle \nabla h(\vw^{(t-1)}), \vw \rangle\Big).
\end{align}
We then take the derivative of Eq.~\ref{eq: final argmin expression}:
\begin{align*}
    \nabla\big(-\eta^{(t)} \langle \vw, \mI^{(t)} \rangle + \mu\big(h(\vw) - \langle \nabla h(\vw^{(t-1)}), \vw \rangle\big)\big) &= 0 \\
    -\eta^{(t)}\mI^{(t)} + \mu[\ln w + 1]_{i} - \mu[\ln w^{(t-1)} + 1]_i &= 0 \\
    \Rightarrow \ln \vw^{(t)} = \ln \vw^{(t-1)} + \frac{\eta^{(t)}\mI^{(t)}}{\mu}.
\end{align*}
Therefore, we can get the following update rule for reward weights with normalization:
\begin{align}
\vw^{(t)} &= \frac{\vw^{(t)}}{\sum_{k}w_k^{(t)}}, \quad \text{where}\; \vw^{(t)} = \vw^{(t-1)} \odot \exp(\frac{\eta^{(t)}\mI^{(t)}}{\mu}), I_i^{(t)} = \langle \nabla J_i(\theta^{(t)}), \sum_{k\in [K]} \nabla J_k(\theta^{(t)}) \rangle.
\end{align}

\subsection{Preliminary Findings}

\begin{figure*}[ht]
    \centering
    \includegraphics[width=\linewidth]{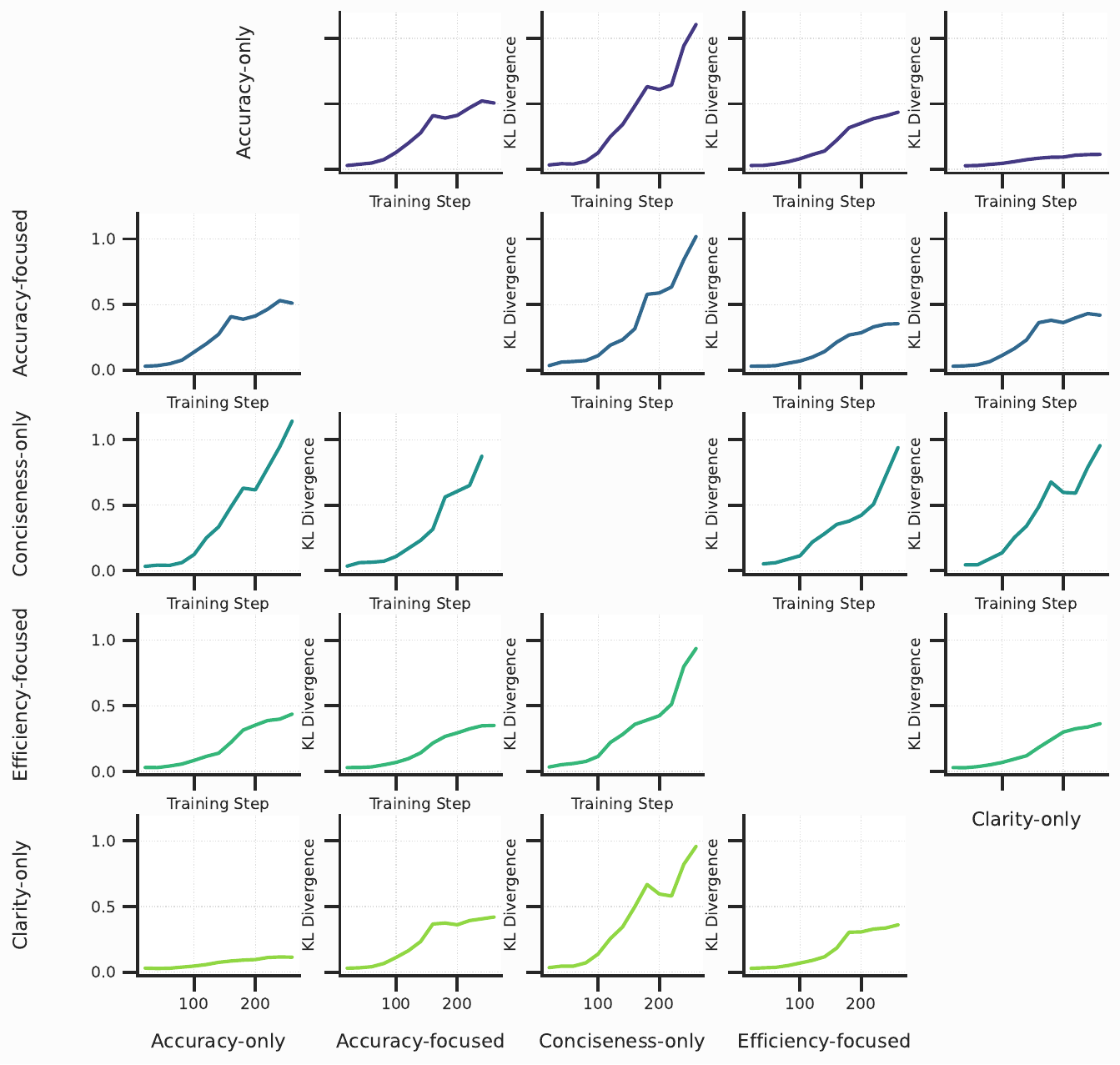}
    \caption{Pairwise KL divergence trajectories between models trained under different weight configurations. Rows represent the model's own weight configurations and columns the configuration it is compared against; within each panel the KL-divergence (y-axis) is plotted over training steps (x-axis).}
    \label{fig: kl divergence}
\end{figure*}
\paragraph{Accuracy and clarity objectives exhibit synergistic effects while conciseness drives learning orthogonally.} To understand the synergistic or antagonistic effect between different objectives in model training, we analyze pairwise KL divergence trajectories between models trained with different weight configurations: accuracy-focused ($0.5,0.25,0.25$), efficiency-focused ($0.25,0.375,0.375$), accuracy-only ($1,0,0$), conciseness-only ($0,1,0$), and clarity-only ($0,0,1$). We conduct this study using the Qwen3-8B model trained on the Math500 dataset via GRPO. As shown in \autoref{fig: kl divergence}, the KL divergence between the accuracy- and clarity-only configurations remains consistently low throughout training, indicating these objectives exhibit strong synergistic effects and induce similar parameter updates. In contrast, models trained with conciseness-only weights diverge significantly from all accuracy- and clarity-only(focused) models, as evidenced by steadily increasing KL divergence over time. This suggests that conciseness applies an orthogonal influence on model optimization, steering parameter updates in a fundamentally different direction compared to accuracy and clarity.
\label{appendix: motivation justification}
\begin{figure}[ht]
  \centering
  \begin{subfigure}[t]{0.333\textwidth}
    \includegraphics[width=\linewidth]{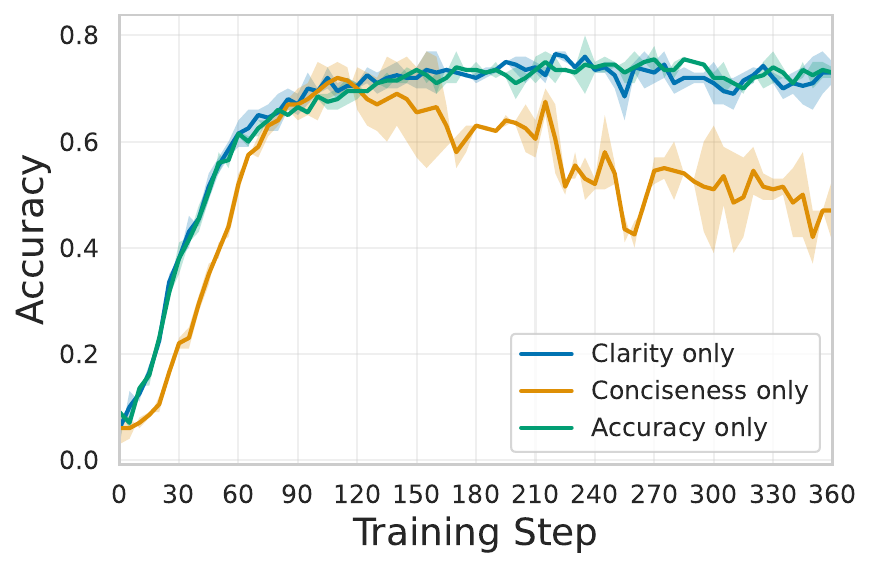}
    \caption{Accuracy}
    \label{subfig: accuracy}
  \end{subfigure}%
  \begin{subfigure}[t]{0.333\textwidth}
    \includegraphics[width=\linewidth]{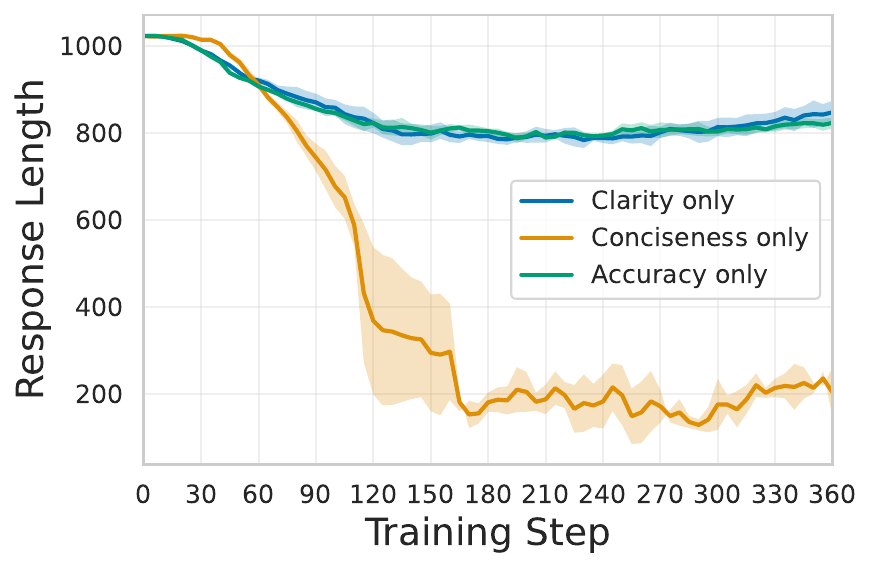}
    \caption{Conciseness}
    \label{subfig: conciseness}
  \end{subfigure}%
  \begin{subfigure}[t]{0.333\textwidth}
    \includegraphics[width=\linewidth]{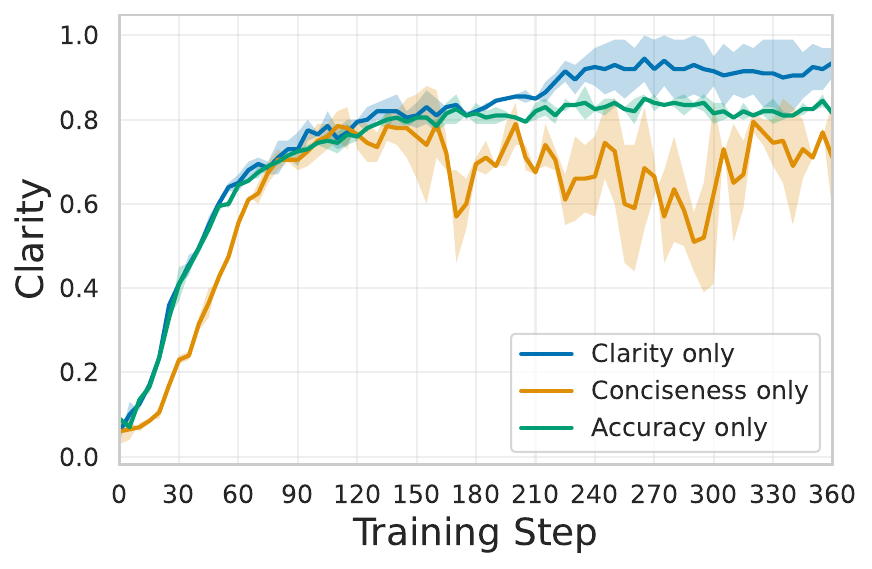}
    \caption{Clarity}
    \label{subfig: clarity}
  \end{subfigure}
    \caption{Validation performance of individual objectives during training. \textbf{Each objective reaches saturation at different training stages.} For example, under the conciseness-only weight configuration, the model achieves optimal response brevity at approximately step 165 (\subref{subfig: conciseness}), whereas the clarity-only configuration reaches the best clarity after step 240 (\subref{subfig: clarity}).}
  \label{fig: motivation}
\end{figure}
\paragraph{Objectives exhibit varying convergence rates.} When training individual objectives, we observe that different objectives reach saturation at different training stages, as illustrated in \autoref{fig: motivation}.

\subsection{hypervolume-Guided Reward Adaptation Details}
\label{appendix: vanilla details}

We compute three types of rewards: (1) accuracy reward through exact matching against the provided ground truth, (2) conciseness reward by comparing current response length to the global average response length from previous rollouts, and (3) clarity reward by checking whether responses contain explicit reasoning steps (e.g., ``first'', ``second'', ``third'') according to our pre-defined rules.

When training the REINFORCE algorithm with a constant learning rate, we observed significant variation in convergence rates across runs using identical hyperparameters, particularly for the conciseness and clarity objectives. To ensure robust and fair comparisons, we repeated runs three times for all baselines and our proposed approach, reporting results from the run that achieved the fastest convergence in each case. This phenomenon was not observed in other algorithms or training settings, and we leave its investigation to future works.

We also observed that zero reward would result in increased response length and degraded performance in both accuracy and formatting, which is likely due to the entropy penalty dominating the gradient updates. To mitigate this issue, we build a minimum threshold of $r_\text{pareto}=0.5$ in Eq.~\ref{eq: r pareto} even when the current checkpoint is Pareto dominated (i.e., $\Delta\text{HV} = 0$).

\subsection{Hyperparameters}
\label{appendix: hyperparameters}
\begin{table}[ht]
    \small
    \centering
    \begin{tabular}{lc}
        \toprule
        Name & Search Bounds \\
        \midrule
        \multirow{2}{*}{$r_\text{pareto}$} 
            & \Big\{$1+\dfrac{1}{1+\exp(-\Delta\text{HV})}$, $\dfrac{2}{1+\exp(-\Delta\text{HV})}$, \\
            & $\mathbf{0.5+1.5\textbf{tanh}(\Delta\text{HV})}$, $0.5+1.5\tanh(3\Delta\text{HV})$\Big\} \\
        \midrule
        Learning rate & $1e^{-6}$ \\
        LR scheduler & constant \\
        Batch size $B$ & 64 \\
        Mini-batch size & 32 \\
        Epoch & 90 \\
        Max response length & 2048 \\
        Rollout size $G$ & 8 \\
        KL coefficient & 0.001 \\ 
        Clip range $\epsilon$ & 0.2 \\
        Dual clip constant $c$ & 3 \\
        Avg. training time & 14 hrs \\
        GPU used & 8 Nvidia H200 (143 GB) \\
        \bottomrule
    \end{tabular}
    \caption{Hyperparameters and other reproducibility information for hypervolume-based weight adaptation.}
    \label{table: hyperparameter vanilla}
\end{table}
\begin{table}[ht]
\small
    \centering
    \begin{tabular}{lc}
        \toprule
        Name & Search Bounds \\
        \midrule
        Learning rate $\eta$ &  \{$1e^{-4}, 1e^{-5}, \mathbf{1e^{-6}}$\} \\
        LR scheduler & \{constant, cosine-annealing, exponential decay, \textbf{polynomial}\}\\
        Poly scheduler power & \{1.01, \textbf{1.03}\} \\
        Regularization factor $\mu$ & \{$1e^{-4}, \mathbf{1e^{-5}}, 1e^{-6}$\} \\
        Clip range $\epsilon$ & \{0.2, \textbf{100}\} \\
        Dual clip constant $c$ & \{3, \textbf{100}\} \\
        \midrule 
        Batch size $B$ & 64 \\
        Mini-batch size & 32 \\
        Epoch & 90 \\
        Max response length & 2048 \\
        Rollout size $G$ & 8 \\
        Max gradient norm & 1.0 \\
        KL coefficient & 0.001 \\ 
        Target layer IDs & $[10, 11, 12, \cdots, 25]$ \\
        Avg. training time & 24 hrs \\
        GPU used & 8 Nvidia H200 (143 GB) \\
        \bottomrule
    \end{tabular}
    \caption{Hyperparameters and other reproducibility information for gradient-based weight optimization.}
    \label{table: hyperparameter optimization}
\end{table}
We use verl \citep{Sheng_2025} for RL training. We provide the hyperparameters used for GRPO training on Math500 in \autoref{table: hyperparameter vanilla} and \autoref{table: hyperparameter optimization}.

\subsection{Pareto Fronts Visualization}
\label{appendix: visualization}

\begin{figure}[ht]
  \centering

  % -------------------- Row (a): Accuracy-focused --------------------
  \begin{subfigure}[t]{\linewidth}
    \centering
    \begin{minipage}[t]{0.333\linewidth}
      \centering
      \includegraphics[width=\linewidth]{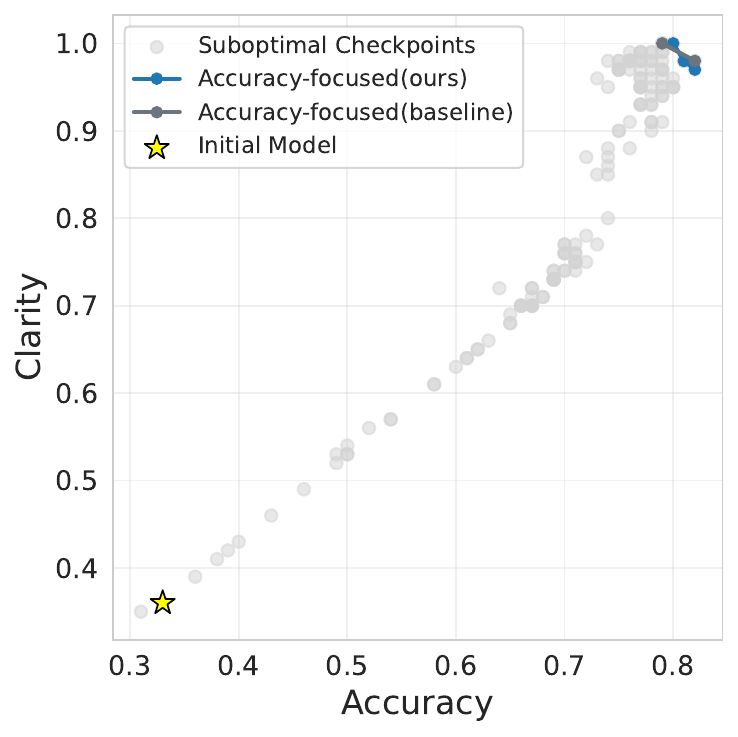}
    \end{minipage}%
    \begin{minipage}[t]{0.333\linewidth}
      \centering
      \includegraphics[width=\linewidth]{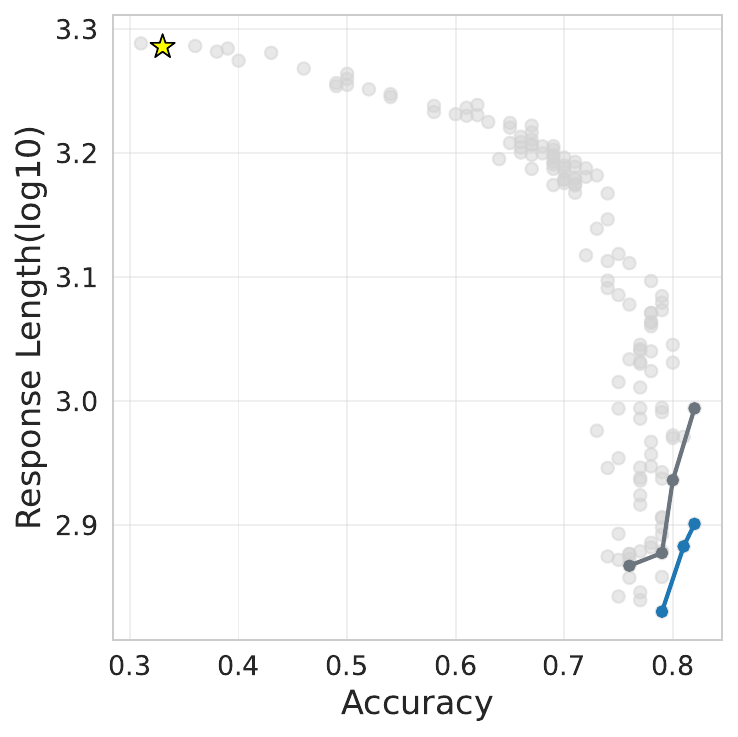}
    \end{minipage}%
    \begin{minipage}[t]{0.333\linewidth}
      \centering
      \includegraphics[width=\linewidth]{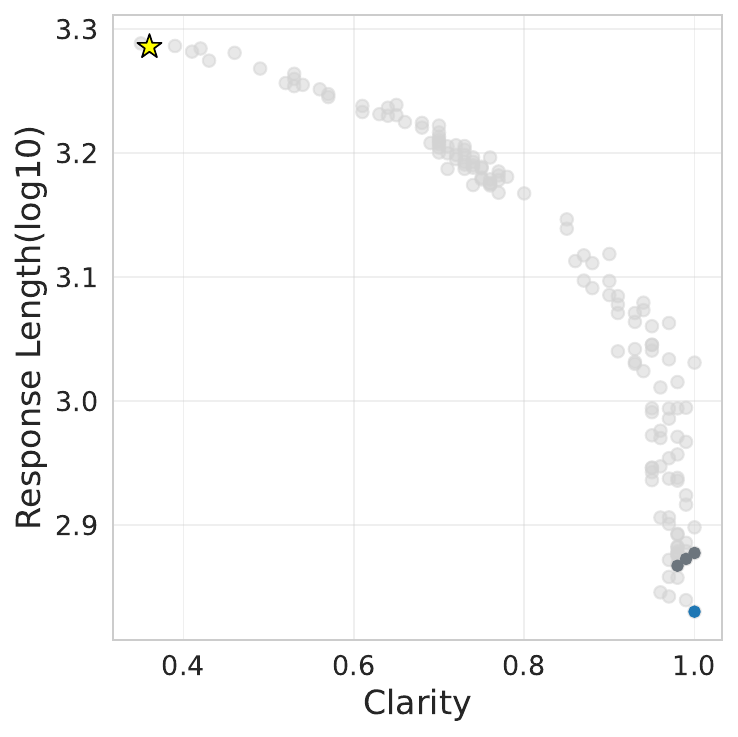}
    \end{minipage}
    \caption{Accuracy-focused.}
    \label{fig:reinforce-row-acc}
  \end{subfigure}

  \vspace{0.6em}

  % -------------------- Row (b): Balanced --------------------
  \begin{subfigure}[t]{\linewidth}
    \centering
    \begin{minipage}[t]{0.333\linewidth}
      \centering
      \includegraphics[width=\linewidth]{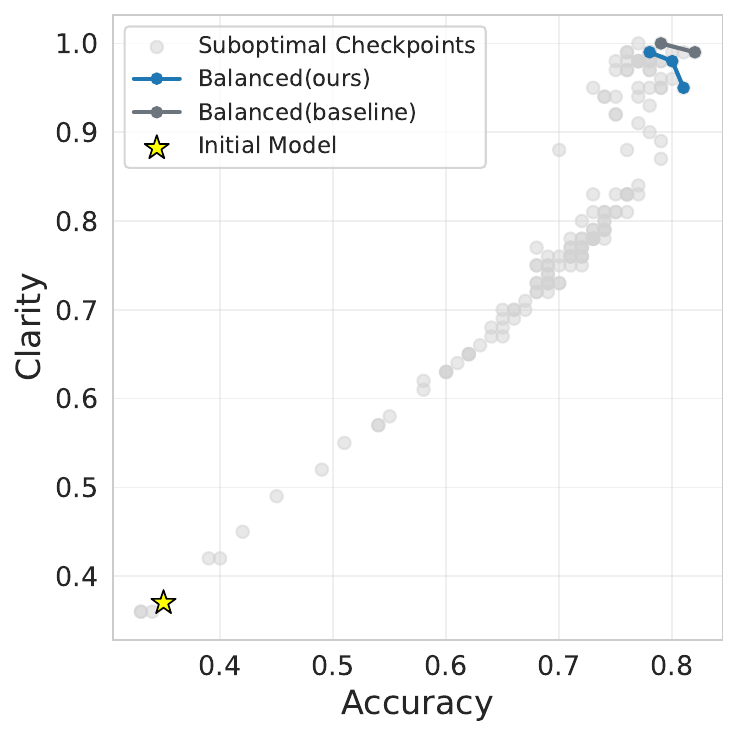}
    \end{minipage}%
    \begin{minipage}[t]{0.333\linewidth}
      \centering
      \includegraphics[width=\linewidth]{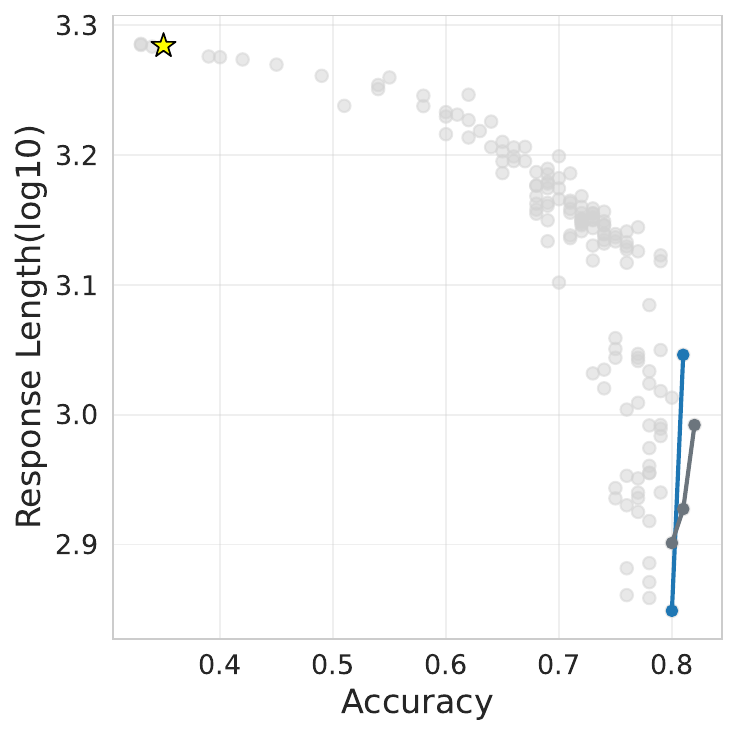}
    \end{minipage}%
    \begin{minipage}[t]{0.333\linewidth}
      \centering
      \includegraphics[width=\linewidth]{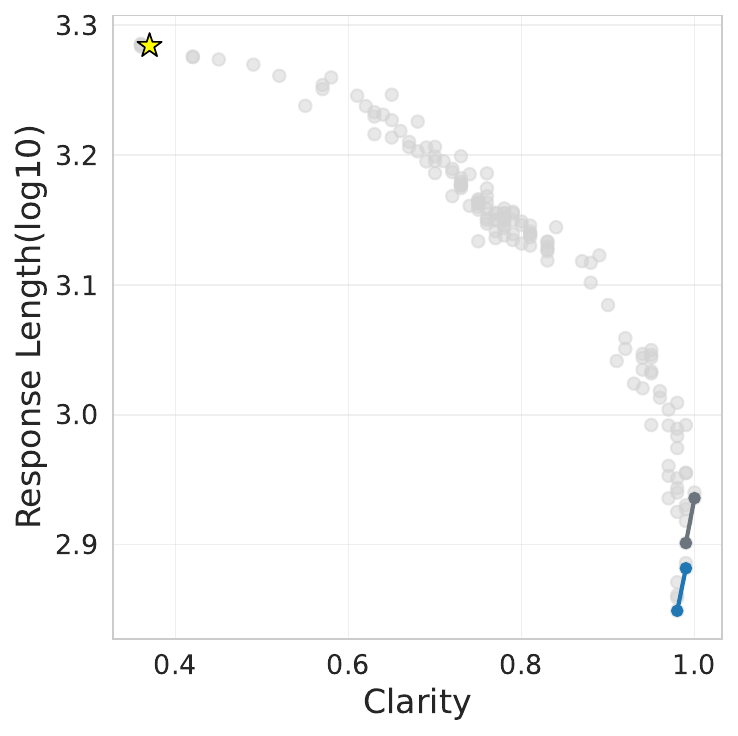}
    \end{minipage}
    \caption{Balanced.}
    \label{fig:reinforce-row-balanced}
  \end{subfigure}

  \vspace{0.6em}

  % -------------------- Row (c): Efficiency-focused --------------------
  \begin{subfigure}[t]{\linewidth}
    \centering
    \begin{minipage}[t]{0.333\linewidth}
      \centering
      \includegraphics[width=\linewidth]{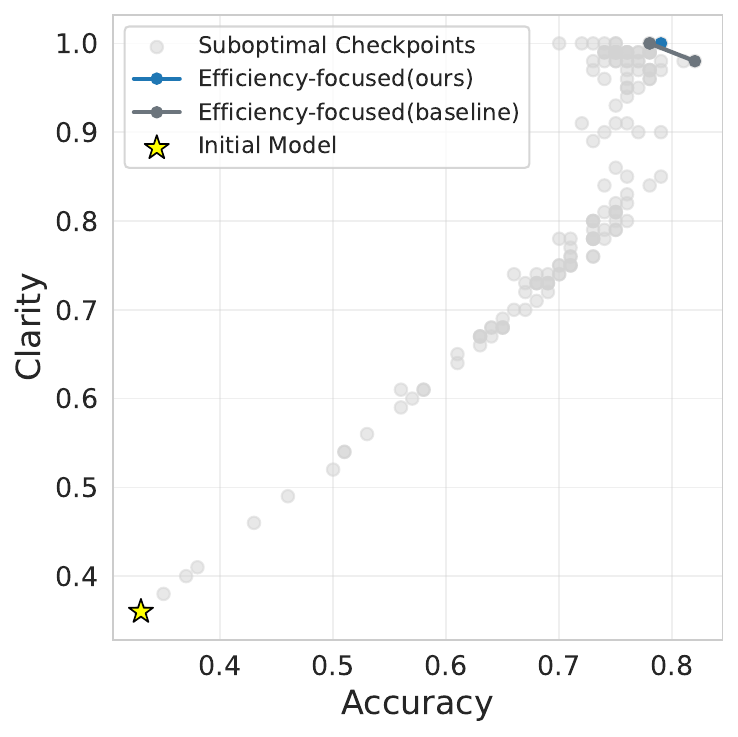}
    \end{minipage}%
    \begin{minipage}[t]{0.333\linewidth}
      \centering
      \includegraphics[width=\linewidth]{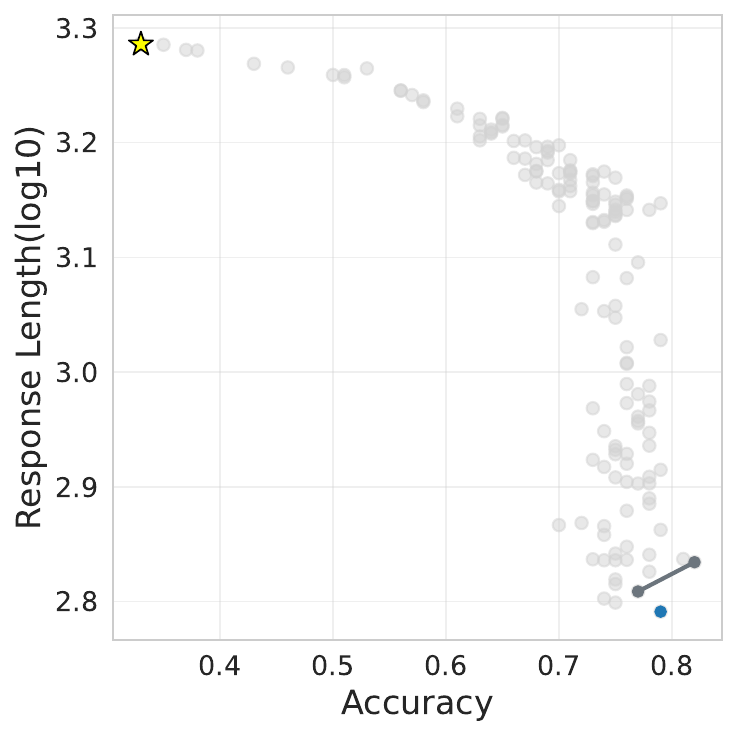}
    \end{minipage}%
    \begin{minipage}[t]{0.333\linewidth}
      \centering
      \includegraphics[width=\linewidth]{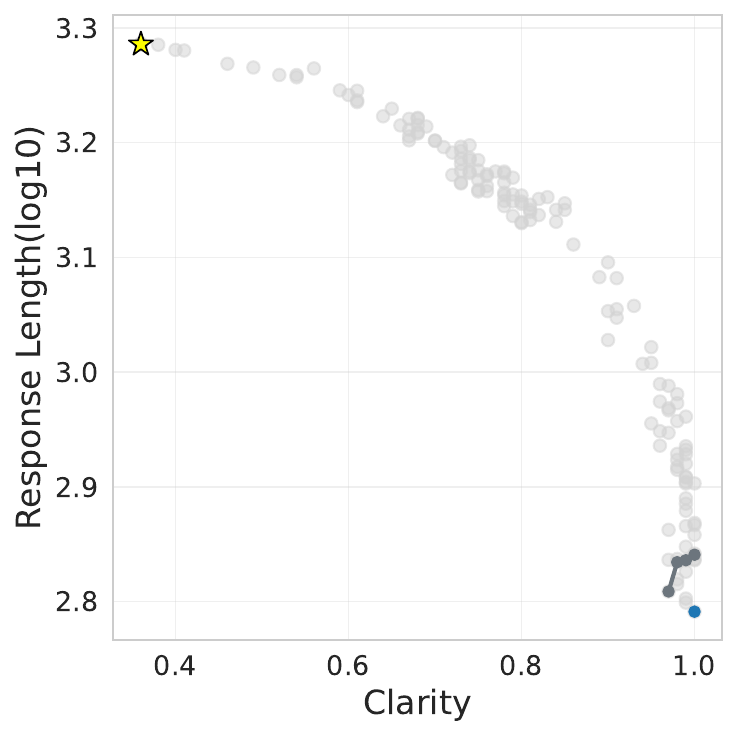}
    \end{minipage}
    \caption{Efficiency-focused.}
    \label{fig:reinforce-row-eff}
  \end{subfigure}

  \caption{Pareto fronts obtained from our hypervolume-guided weight adaptation against baselines under REINFORCE training.}
  \label{fig: reinforce visualization}
\end{figure}

\begin{figure}[ht]
  \centering

  % -------------------- Row (a): GRPO --------------------
  \begin{subfigure}[t]{\linewidth}
    \centering
    \begin{minipage}[t]{0.333\linewidth}
      \centering
      \includegraphics[width=\linewidth]{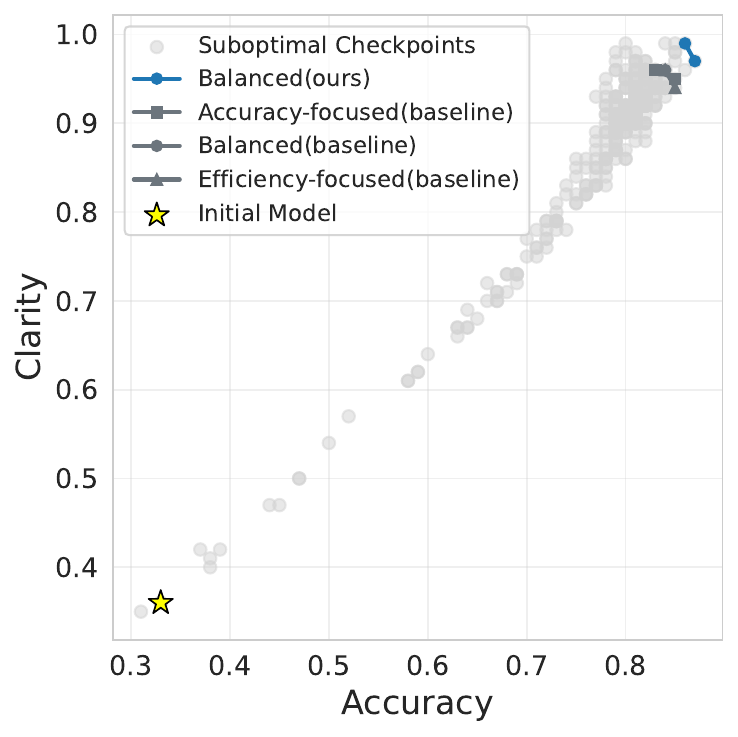}
    \end{minipage}%
    \begin{minipage}[t]{0.333\linewidth}
      \centering
      \includegraphics[width=\linewidth]{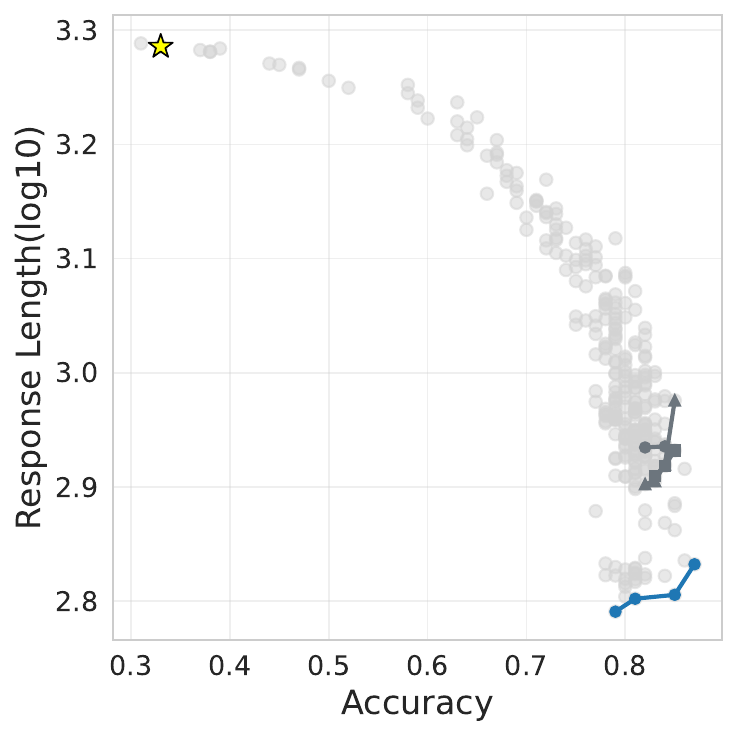}
    \end{minipage}%
    \begin{minipage}[t]{0.333\linewidth}
      \centering
      \includegraphics[width=\linewidth]{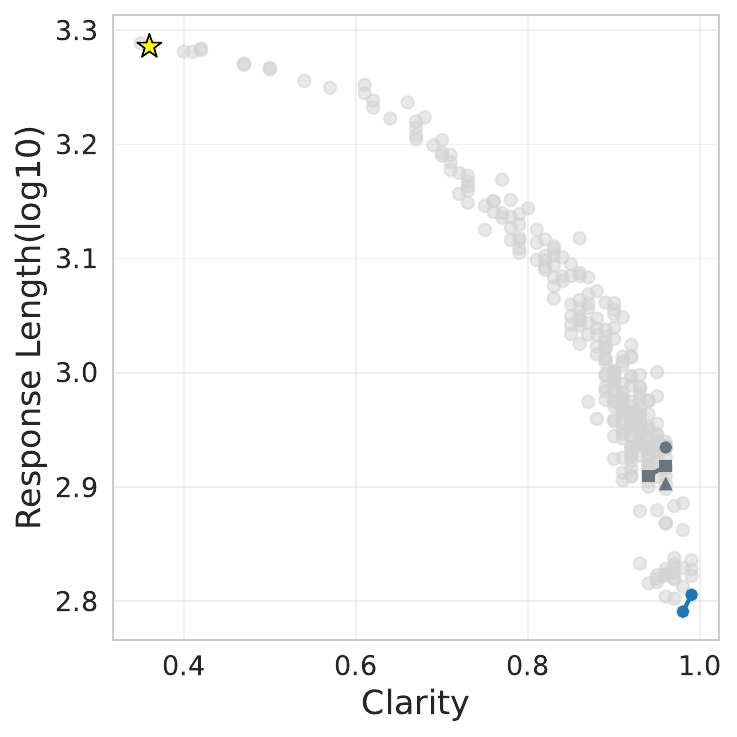}
    \end{minipage}
    \caption{GRPO.}
    \label{fig:opt-row-grpo}
  \end{subfigure}

  \vspace{0.6em}

  % -------------------- Row (b): REINFORCE --------------------
  \begin{subfigure}[t]{\linewidth}
    \centering
    \begin{minipage}[t]{0.333\linewidth}
      \centering
      \includegraphics[width=\linewidth]{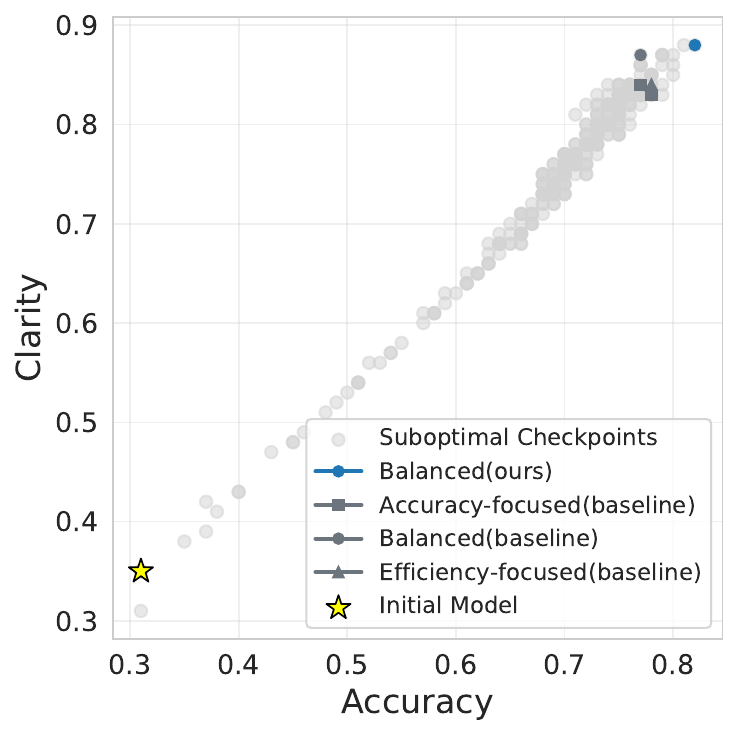}
    \end{minipage}%
    \begin{minipage}[t]{0.333\linewidth}
      \centering
      \includegraphics[width=\linewidth]{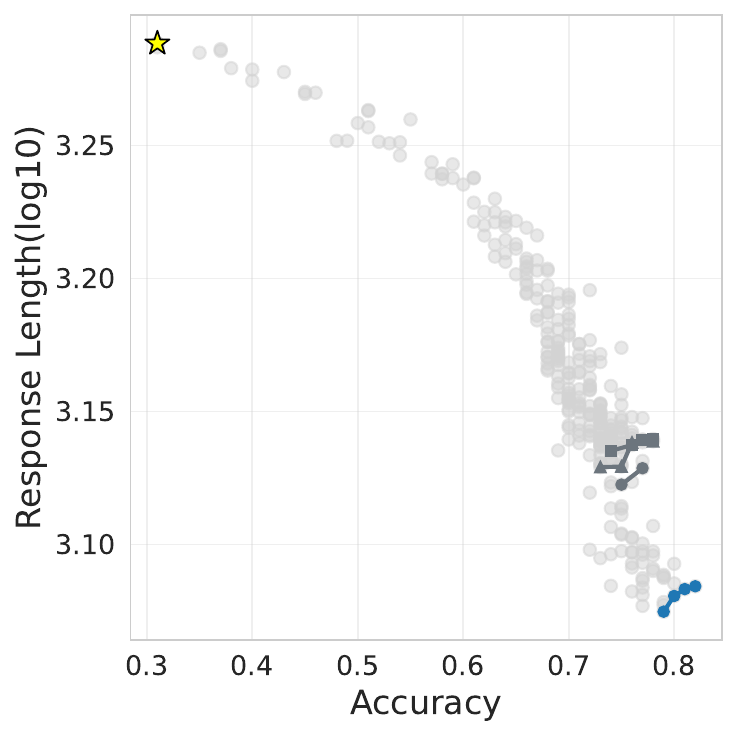}
    \end{minipage}%
    \begin{minipage}[t]{0.333\linewidth}
      \centering
      \includegraphics[width=\linewidth]{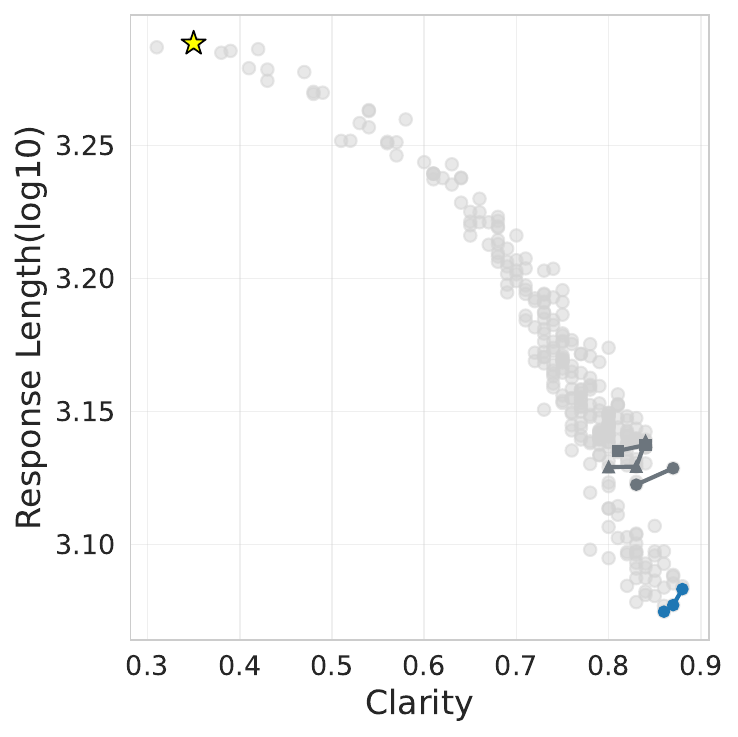}
    \end{minipage}
    \caption{REINFORCE.}
    \label{fig:opt-row-reinforce}
  \end{subfigure}

  \vspace{0.6em}

  % -------------------- Row (c): RLOO --------------------
  \begin{subfigure}[t]{\linewidth}
    \centering
    \begin{minipage}[t]{0.333\linewidth}
      \centering
      \includegraphics[width=\linewidth]{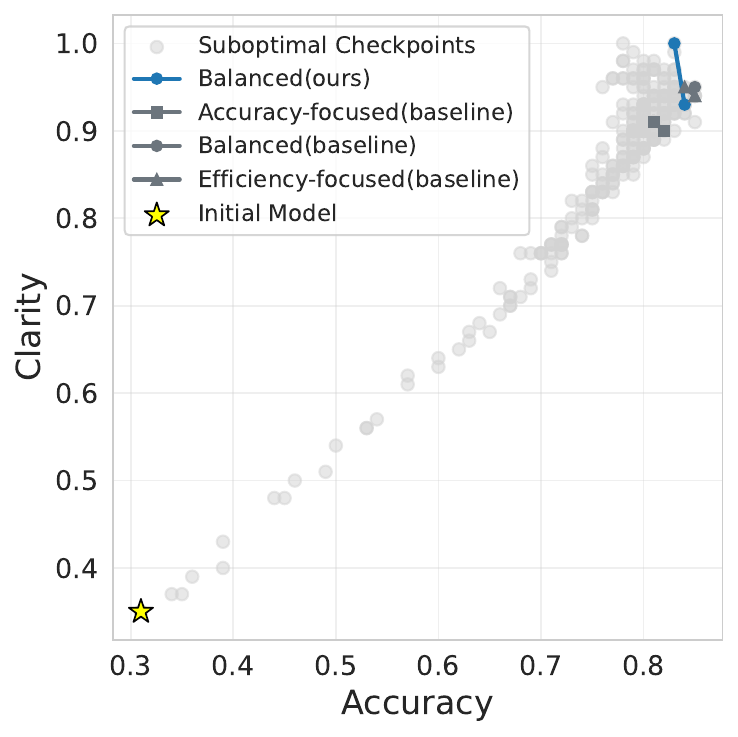}
    \end{minipage}%
    \begin{minipage}[t]{0.333\linewidth}
      \centering
      \includegraphics[width=\linewidth]{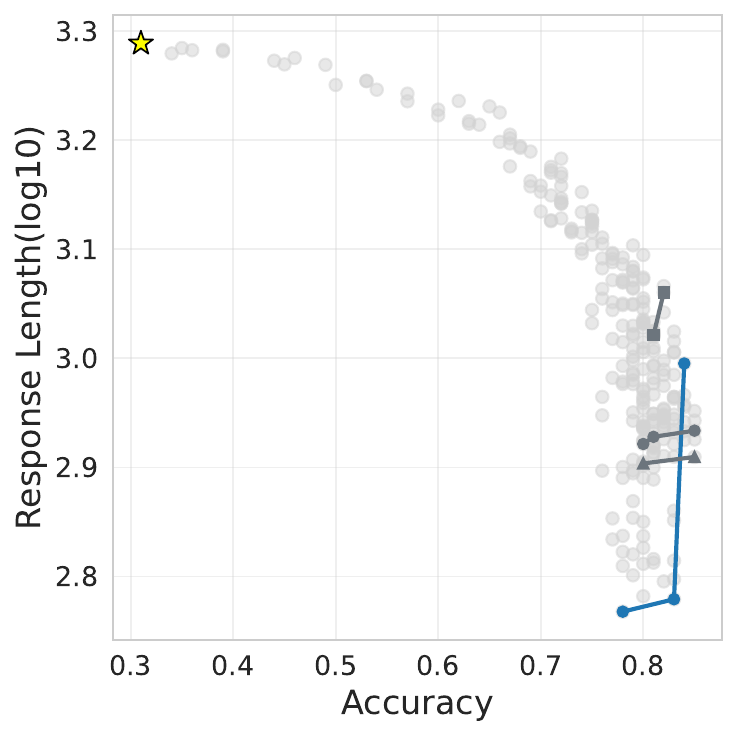}
    \end{minipage}%
    \begin{minipage}[t]{0.333\linewidth}
      \centering
      \includegraphics[width=\linewidth]{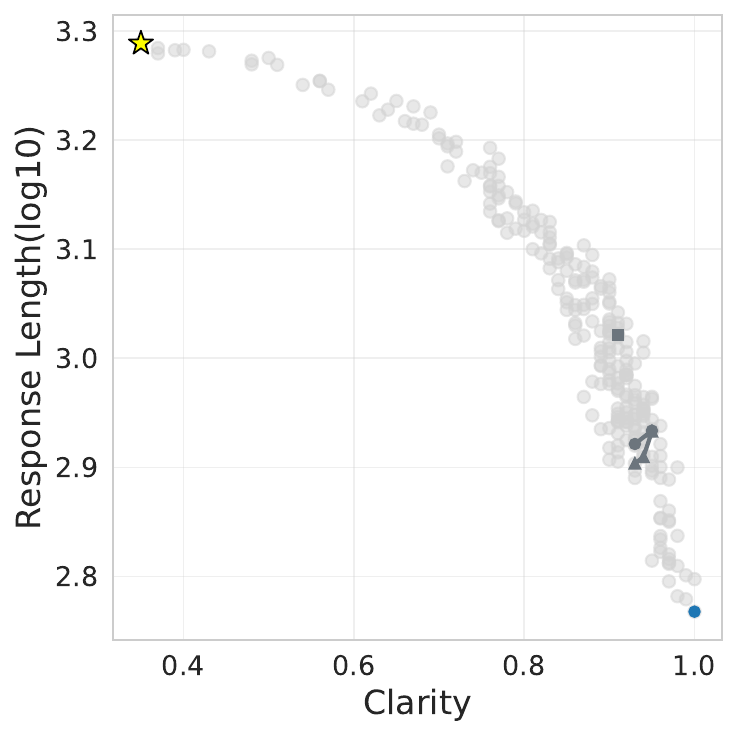}
    \end{minipage}
    \caption{RLOO.}
    \label{fig:opt-row-rloo}
  \end{subfigure}

  \caption{Pareto fronts obtained from our gradient-based weight optimization against baselines under different RL training algorithms.}
  \label{fig: optimization visualization}
\end{figure}

\end{document}